\newtheorem{theorem}{Theorem}
\newtheorem{lemma}{Lemma}
\newtheorem{proposition}{Proposition}
\newtheorem{property}{Property}
\newtheorem{remark}{Remark}
\newtheorem{definition}{Definition}
\newcommand\scalemath[2]{\scalebox{#1}{\mbox{\ensuremath{\displaystyle #2}}}}
\newcommand{\identity}[0]{{\scriptsize I}d}
\newcommand{\REV}[1]{{#1}}
\begin{document}
 \DraftwatermarkOptions{%
 angle=0,
 hpos=0.5\paperwidth,
 vpos=0.925\paperheight,
 fontsize=0.012\paperwidth,
 color={[gray]{0.2}},
 text={
   \parbox{0.99\textwidth}{\copyright \, 2022. This manuscript version is made available under the \href{https://creativecommons.org/licenses/by-nc-nd/4.0}{CC-BY-NC-ND 4.0 license}. This manuscript has been published at Elsevier European Journal of Control. Published article available at DOI \href{https://doi.org/10.1016/j.ejcon.2022.100632}{10.1016/j.ejcon.2022.100632}.}},
 }

\begin{frontmatter}
\journal{European Journal of Control}
\title{Recurrent Neural Network-based Internal Model Control design for stable nonlinear systems}
\author[Polimi]{Fabio Bonassi\corref{cor1}}
\author[Polimi]{Riccardo Scattolini}
\address[Polimi]{Dipartimento di Elettronica, Informazione e Bioingegneria, Politecnico di Milano, Via Ponzio 34/5, 20133 Milano, Italy - email: name.surname@polimi.it}
\cortext[cor1]{Corresponding author}

\begin{abstract}
    Owing to their superior modeling capabilities, gated Recurrent Neural Networks, such as Gated Recurrent Units (GRUs) and Long Short-Term Memory  networks (LSTMs), have become popular tools for learning dynamical systems.
    This paper aims to discuss how these networks can be adopted for the synthesis of Internal Model Control (IMC) architectures.
    To this end, first a gated recurrent network is used to learn a model of the unknown input-output stable plant. 
    Then, a controller gated recurrent network is trained to approximate the model inverse.
    \REV{The stability of these networks, ensured by means of a suitable training procedure, allows to guarantee the input-output closed-loop stability.}
    The proposed scheme is able to cope with the saturation of the control variables, and can be deployed on low-power embedded controllers, \REV{as it requires limited online computations}.
    The approach is then tested on the Quadruple Tank benchmark system \REV{and compared to alternative control laws, resulting in remarkable closed-loop performances.}
\end{abstract}

\begin{keyword}
	Recurrent Neural Network, Internal Model Control, Neurocontrollers
\end{keyword}
\end{frontmatter}

\section{Introduction}
In recent years, Neural Networks (NNs) have gained popularity in the control systems community.
The key factors behind this success are the flexibility and modeling power of these tools, which make them suitable for system identification \cite{forgione2020model, rehmer2019using} and control tasks \cite{hunt1992neural, ali2015artificial}.

Through the control system designer lenses, the wide variety of neural networks' architectures can be classified in Feed-Forward Neural Networks (FFNNs), and Recurrent Neural Networks (RNNs) \cite{goodfellow2016deep}.
The former group is composed of static NNs, which do not have inherent memory and are mainly used to approximate static functions, for which task they enjoy universal approximation capabilities \cite{hornik1989multilayer}.
On the contrary, RNNs are stateful NNs, i.e., they retain memory of the past data, and hence they are particularly suitable for learning timeseries and dynamical systems \cite{bianchi2017recurrent, yu2004nonlinear, ogunmolu2016nonlinear}. 
In these tasks, RNNs have been shown to feature universal approximation capabilities \cite{schafer2006recurrent}.

Among RNNs, the most popular architectures are the so-called gated recurrent networks, designed to avoid the vanishing/exploding gradient problems, which instead plague traditional RNN architectures \cite{hochreiter1998vanishing}. 
More specifically, Gated Recurrent Units (GRUs) \cite{chung2014empirical} and Long Short-Term Memory (LSTM) \cite{hochreiter1997long} are acknowledged to be the go-to architectures for system identification tasks \cite{forgione2020model}.

Owing to their modeling capabilities, NNs have been widely adopted in conjunction with model-based control strategies. 
For example, the use of RNNs as system models for Model Predictive Control (MPC) design has been explored both in the industry \cite{hosen2011control, wong2018recurrent, lanzetti2019recurrent}, and in the academia \cite{wu2019machine, patan2014neural, terzi2021learning, bonassi2021nonlinear}.
\REV{hile MPC generally attains remarkable performances, constraint satisfaction and -- if suitably designed -. closed-loop stability guarantees \cite{terzi2021learning}, it requires to solve an online nonlinear optimization problem at each control step, which might be computationally prohibitive in some applications.}

An alternative control architecture in which NNs can be profitably employed is the Internal Model Control (IMC) scheme.
This architecture is particularly interesting because of its limited online computational cost, which makes it suitable to the deployment to embedded controllers \REV{with limited computational resources}.
In the IMC scheme, a reliable model of the (unknown) plant is assumed to be available, and the controller is constructed as the inverse (or, at least, an approximation of the inverse) of such model \cite{economou1986internal}.
The controller is supposed to generate the control action that steers the model output close to the reference trajectory.
In order to avoid open-loop operations, the plant-model mismatch is fed back and summed to the reference signal \cite{economou1986internal, morari1989robust}, thus leveraging  output measurements to improve the robustness of the scheme.

The aim of this paper is to show how RNNs can be exploited to synthesize the ingredients of the IMC scheme.
In particular, we propose to use a GRU network to learn a model of the unknown plant, and then to learn a controller GRU network as an approximation of the model's inverse itself.
\REV{The proposed approach moves from traditional FFNN-based approaches,} such as that proposed in \cite{hunt1991neural}, where static FFNN have been used as system's model and model's inverse approximators, and in \cite{rivals2000nonlinear}, where FFNNs have been used in an auto-regressive configuration to control a stable SISO system. \REV{With respect to existing methods, our approach yields the following advantages.
\begin{enumerate}[i.]
	\item It is based on RNNs, which represent better candidates for learning dynamical systems (such as the model and its inverse), thanks to the retained long-term memory of the past trajectories. In contrast, for feed-forward auto-regressive architectures, this memory must be enforced by supplying the past input-output data-points as inputs of the network \cite{rivals2000nonlinear}, typically resulting in less accurate long-term learning compared to RNNs \cite{bonassi2020nnarx}. 
	\item The controller is implemented by a gated recurrent network, which is inherently strictly proper, as opposed to feed-forward architectures where one needs to deal with the issue of the controller's improperness in the controller design phase \cite{rivals2000nonlinear}, e.g. in presence of delays.
	\item Owing to the model and controller learning procedures, the proposed scheme can handle MIMO systems, while previous approaches have been formulated for SISO systems. In addition, our method accounts for input saturation constraints.
\end{enumerate}}
The clear advantages of using gated recurrent networks come, however, at the expense of losing the guarantees on the existence of the exact model's inverse.
Nonetheless, we show that a sufficiently accurate approximation of this inverse is enough to ensure closed-loop stability and satisfactory performances.

Under the assumption of input-output stability\footnote{\REV{The term \emph{input-output stability} is here adopted in the sense of finite-gain $\mathcal{L}_p$ stability \cite{khalil2002nonlinear}, meaning that any bounded input $u$ leads to a bounded output $y$.}} of the plant to be controlled, we propose to enforce a similar stability property for both the learned system model and the  controller, namely the Incremental Input-to-State Stability ($\delta$ISS)\footnote{The $\delta$ISS property guarantees that the effects of different initial conditions asymptotically vanish, and that the smaller the maximum euclidean distance between two input sequences, the closer are the resulting state (and output) trajectories, as more formally described in Section \ref{sec:rnn:stability}.} \cite{bayer2013discrete}.
\REV{To this end, we show how to train provenly $\delta$ISS model and controller RNNs, by enforcing the fulfillment of sufficient conditions that have been recently proposed in the literature, see e.g. \cite{terzi2021learning} for LSTMs and \cite{bonassi2020stability, stipanovic2020stability} for GRUs. 
We then show that the stability of these systems allows one to guarantee the closed-loop stability of the IMC architecture, as well as to ease the  generation of the dataset used to learn the controller network itself.
}


The proposed approach has been tested on the Quadruple Tank system described in \cite{alvarado2011comparative}.
In particular, after showing how a stable gated recurrent network can be used to learn the plant's model, we focus on the problem of generating a suitable dataset for the controller learning procedure.
\REV{The closed-loop performances have been eventually tested and compared to alternative control schemes. 
Results show that the proposed IMC architecture allows to obtain better performances than those obtained by a FFNN-based IMC, and in line with those of a MPC control law, although with a much lower online computational cost.}

\medskip
The paper is structured as follows. 
In Section \ref{sec:rnn} state-space RNNs are presented, with a particular focus on deep GRUs, for which recent stability results are summarized. 
Then, in Section \ref{sec:imc}, the IMC architecture is described, and we show how to use GRUs to  learn the model of the system and to approximate its inverse to obtain the controller.
Traditional stability results of IMC are summarized in Section \ref{sec:stability}.
In Section \ref{sec:example} the application of the proposed approach is described on the Quadruple Tank benchmark system, and the closed-loop performances are assessed. 
Eventually, conclusions are drawn in Section \ref{sec:conclusion}.


\subsection{Notation}
Given a vector $v$, we denote by $v^\prime$ its transpose and by $\| v \|_p$ its $p$-norm.  
Sequences of vectors are represented using boldface fonts, i.e. $\bm{v}_k = \{ v(0), v(1), ..., v(k) \}$. 
\REV{The $\ell_{p, q}$ norm of sequences is defined as $\| \bm{v}_k \|_{p, q} = \big\| \,[ \, \| v(0) \|_p, \| v(1) \|_p, ..., \| v(k) \|_p  ]^\prime \, \big\|_q$ where, in particular, $\| \bm{v}_k \|_{2, \infty} = \max_{t \in \{ 0, ... k \}} \| v(t) \|_2$.}
With reference to time-varying quantity $x(k)$, unless otherwise specified the generic time index $k$ may be omitted, i.e. $x = x(k)$, and superscript $^+$ may be used to indicate the same quantity at time $k+1$, i.e. $x^+ = x(k+1)$.
The symbol $\circ$ denotes the Hadamard (element-wise) product between matrices or vectors.
The sigmoid and tanh activation functions are indicated by $\sigma$ and $\phi$, respectively, i.e. $\sigma(x) = \frac{1}{1+e^{-x}}$ and $\phi(x) = \tanh(x)$.

\section{Recurrent neural networks} \label{sec:rnn}
This section aims to introduce recurrent networks and show how they can be employed to learn nonlinear dynamical systems. 
As discussed, since RNNs are stateful neural networks, they are well-suited to the system identification task since the memory of the past data is encoded in the states of the network.
In this work we consider gated recurrent network such as GRUs \cite{bonassi2020stability} and LSTMs \cite{terzi2021learning, bonassi2019lstm}.
In these architectures, the gates manage the flow of information throughout the network, thus avoiding the so-called vanishing and exploding gradient problems \cite{pascanu2013difficulty}.
A generic gated recurrent network in state-space form reads as
\begin{equation} \label{eq:rnn:system}
	\begin{dcases}
		\xi^+ = f(\xi, v; \Phi) \\
		\zeta = g(\xi; \Phi)
	\end{dcases},
\end{equation}
where $\xi \in \mathbb{R}^n$ is the state vector, $v \in \mathbb{R}^m$ is the input, $\zeta \in \mathbb{R}^p$ is the output, and $\Phi$ is the set of parameters called weights.
The goal of the learning procedure is to find the parametrization $\Phi$ which best explains the available data. 
For example, in a system identification task, one wants the RNN's output $\zeta$ to approximate the measured output trajectory $\bm{y}$ given the input sequence $\bm{u}$ applied to the plant.

For the sake of simplicity, the reminder of this work makes reference to GRU networks, but the approach can be easily extended to LSTM architectures \cite{terzi2021learning}.

\subsection{Deep GRU architecture}
\begin{subequations} \label{eq:rnn:gru}
In the following, the architecture of deep Gated Recurrent Units is described in detail.
Let us indicate by $M$ be the number of layers of the network. 
We denote by $\xi^l \in \mathbb{R}^{n_l}$ and $v^l \in \mathbb{R}^{m_l}$ the state and the input of the $l$-th layer, respectively.
The state equation of the layer $l \in \{1, ..., M\}$ reads as
\begin{equation} \label{eq:rnn:gru:state}
	\xi^{l,+} = z^l \circ \xi^l + (1-z^l) \circ \phi (W_r^l v^l + U_r^l \, f^l \circ \xi^l + b_r^l),
\end{equation}
where $z^l = z^l(\xi^l, v^l)$ and $f^l = f^l(\xi^l, v^l)$ are the update and forget gates, defined as
\begin{align} \label{eq:rnn:gru:gates}
	z^l(\xi^l, v^l) &= \sigma(W_z^l v^l + U_z^l \xi^l + b_z^l), \\
	f^l(\xi^l, v^l) &= \sigma(W_f^l v^l + U_f^l \xi^l + b_f^l).
\end{align}
$W_\star^l$, $U_\star^l$, and $b_\star^l$ are the weight matrices of the layer.
As far as $v^l$ is concerned, the input to the first layer is the network's input, i.e. $v^1 = v$, while the input of any following layer is the updated state of the previous one, which means
\begin{equation}\label{eq:rnn:gru:input}
	\begin{dcases}
		v^1 = v \\	
		v^l = \xi^{l-1, +} \quad \forall l \in \{2, ..., M\}
	\end{dcases}.
\end{equation}
Eventually, the following output transformation is applied
\begin{equation} \label{eq:rnn:gru:output}
	\zeta = \psi(U_o \xi^M + b_o),
\end{equation}
where $\psi$ may be a nonlinear activation function, such as $\sigma$ and $\phi$, or -- if \eqref{eq:rnn:gru:output} is linear -- the identity function \identity.
The set of weights $\Phi$ of the deep GRU is
\begin{equation*}\scalemath{0.95}{
    \Phi = \Big\{ \{ W_z^l, U_z^l, b_z^l, W_r^l, U_r^l, b_r^l, W_f^l, U_f^l, b_f^l \}_{\forall l \in \{1, ..., M\}}, U_o, b_o\Big\}}.
\end{equation*}
\end{subequations}

It is worth noticing that, collecting the states $\xi^l$ in a vector $\xi = [ \xi^{1 \prime}, ..., \xi^{M \prime}]^\prime$, the deep GRU model \eqref{eq:rnn:gru} falls in the nonlinear state-space form  \eqref{eq:rnn:system}.

\subsection{Stability properties} \label{sec:rnn:stability}
We now summarize few stability results concerning deep GRUs that will be useful in the remainder of the article.
For more details, the reader is addressed to \cite{bonassi2020stability}.
In the interests of clarity, in the following, we denote by $\xi(k, \bar{\xi}, \bm{v}_k)$ the state trajectory of the deep GRU at time $k$, obtained initializing \eqref{eq:rnn:gru} in the initial state $\bar{\xi}$ and applying the input sequence $\bm{v}_k$.

\begin{lemma}[Lemma 3, \cite{bonassi2020stability}]
    The set $\Xi = [-1, 1]^n = \bigtimes_{l=1}^M [-1, 1]^{n_l}$ is an invariant set of the deep GRU \eqref{eq:rnn:gru}, that is,
    \begin{equation*}
        \xi \in \Xi \,\, \Rightarrow  \,\, \xi^+ = f(\xi, v; \Phi) \in \Xi,
    \end{equation*}
    for any input $v$.
\end{lemma}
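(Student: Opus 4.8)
The plan is to exploit the convex-combination structure of the state update \eqref{eq:rnn:gru:state} together with the ranges of the activation functions. Since $\Xi = \bigtimes_{l=1}^M [-1, 1]^{n_l}$ is a Cartesian product over the layers, it suffices to show that $\xi^l \in [-1,1]^{n_l}$ implies $\xi^{l,+} \in [-1,1]^{n_l}$ for each layer $l$, and then to conclude for the stacked state $\xi = [\xi^{1\prime}, \ldots, \xi^{M\prime}]^\prime$.

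First I would record the two elementary facts about the activations. Since $\sigma$ maps $\mathbb{R}$ into $(0,1)$ componentwise, the update gate satisfies $z^l \in (0,1)^{n_l}$, whence $1 - z^l \in (0,1)^{n_l}$ as well; and since $\phi = \tanh$ maps $\mathbb{R}$ into $(-1,1)$ componentwise, the inner term $\phi(W_r^l v^l + U_r^l \, f^l \circ \xi^l + b_r^l) \in (-1,1)^{n_l}$. Crucially, both bounds hold for \emph{any} value of the respective arguments, hence regardless of the values of $\xi^l$, $f^l$, and the layer input $v^l$ appearing inside the nonlinearities.

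Next I would read \eqref{eq:rnn:gru:state} componentwise: for each component $i$,
\[
\xi^{l,+}_i = z^l_i \, \xi^l_i + (1 - z^l_i) \, \phi_i(\,\cdot\,),
\]
which is a convex combination -- with weights $z^l_i$ and $1 - z^l_i$ both in $(0,1)$ -- of the two quantities $\xi^l_i \in [-1,1]$ (by the hypothesis $\xi^l \in [-1,1]^{n_l}$) and $\phi_i(\,\cdot\,) \in (-1,1) \subset [-1,1]$. A convex combination of points of the interval $[-1,1]$ remains in $[-1,1]$, so $\xi^{l,+}_i \in [-1,1]$, and therefore $\xi^{l,+} \in [-1,1]^{n_l}$.

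Finally, since the forward map \eqref{eq:rnn:gru:input} feeds $v^l = \xi^{l-1,+}$ into the subsequent layer but the per-layer bound just established is independent of the layer input, I would apply the argument to every $l \in \{1, \ldots, M\}$ and stack the results, obtaining $\xi \in \Xi \Rightarrow \xi^+ = f(\xi, v; \Phi) \in \Xi$ for any $v$. I do not expect a genuine obstacle here; the only point requiring care is to recognize that the update gate $z^l$ yields a \emph{componentwise} convex combination, so that invariance follows from the boundedness of $\sigma$ and $\phi$ alone, with no condition whatsoever on the weights $\Phi$.
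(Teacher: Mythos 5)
Your proof is correct and coincides with the argument behind the result as the paper uses it: this lemma is stated without proof here, being imported from the cited reference \cite{bonassi2020stability}, where the proof is precisely your componentwise convex-combination argument --- $z^l \in (0,1)^{n_l}$ and $\phi(\cdot) \in (-1,1)^{n_l}$ force each updated component $\xi^{l,+}_i = z^l_i \xi^l_i + (1-z^l_i)\phi_i(\cdot)$ to remain in $[-1,1]$, independently of the layer input, the gate values, and the weights $\Phi$. Your handling of the deep (stacked) structure is also the right one: since the per-layer bound does not depend on $v^l$, the feed-through $v^l = \xi^{l-1,+}$ in \eqref{eq:rnn:gru:input} poses no obstacle, and invariance of $\Xi = \bigtimes_{l=1}^M [-1,1]^{n_l}$ follows layer by layer.
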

\begin{lemma} [Lemma 4, \cite{bonassi2020stability}]\label{lemma:invset}
	For any initial state $\bar{\xi} = [\bar{\xi}^{1 \prime}, ..., \bar{\xi}^{M \prime}]^\prime$ and any input sequence, it holds that
	\begin{enumerate}[i. \,]
	    \item if $\bar{\xi} \notin \Xi$, $\| \xi(k, \bar{\xi}, \bm{v}_{k}) \|_\infty$ is strictly decreasing with $k$, until $\xi(k, \bar{\xi}, \bm{v}_{k}) \in \Xi$;
	    \item the convergence happens in finite time, i.e. $\exists \bar{k} \geq 0$ finite such that, for any $k \geq \bar{k}$, $\xi(k, \bar{\xi}, \bm{v}_{k}) \in \Xi$;
	    \item the convergence of each component of $\xi$ into $[-1, 1]$ is exponential.
	\end{enumerate}
\end{lemma}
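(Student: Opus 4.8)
The plan is to exploit the convex-combination structure of the state update \eqref{eq:rnn:gru:state}. Reading it component-wise, the $i$-th entry of the $l$-th layer state is $\xi^{l,+}_i = z^l_i \, \xi^l_i + (1 - z^l_i)\, \phi_i$, where $\phi_i$ denotes the $i$-th entry of $\phi(W_r^l v^l + U_r^l f^l \circ \xi^l + b_r^l)$. Since $\sigma$ takes values in $(0,1)$ and $\phi = \tanh$ takes values in $(-1,1)$, each update is a strict convex combination of $\xi^l_i$ and a quantity of modulus strictly below one. First I would record the elementary bound
\[
|\xi^{l,+}_i| \le z^l_i\, |\xi^l_i| + (1 - z^l_i)\,|\phi_i| < z^l_i\, |\xi^l_i| + (1 - z^l_i),
\]
which holds for \emph{any} input, as it uses only $|\phi_i| < 1$ and $z^l_i \in (0,1)$ pointwise.

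For claim i, I would apply this to any component with $|\xi^l_i| > 1$: subtracting one gives $|\xi^{l,+}_i| - 1 < z^l_i(|\xi^l_i| - 1) < |\xi^l_i| - 1$, so every out-of-range entry strictly contracts its excess over one, while every in-range entry stays in $[-1,1]$ — which is exactly the invariance of $\Xi$ established in the preceding lemma. Taking the maximum over all layers and components, the entry attaining $\|\xi\|_\infty$ strictly decreases whenever $\|\xi\|_\infty > 1$, and all remaining entries stay below the current maximum; hence $\|\xi(k, \bar{\xi}, \bm{v}_{k})\|_\infty$ is strictly decreasing as long as $\xi \notin \Xi$. Note that this step needs only the pointwise strictness of the gates, not any uniformity.

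The finite-time and exponential claims require a sharper, uniform estimate, and this is where the main work lies. By claim i the state norm is non-increasing, so $\xi^l$ stays bounded by $\max(1, \|\bar{\xi}\|_\infty)$; together with boundedness of the input sequence — for $l \ge 2$ the input $v^l = \xi^{l-1,+}$ is already bounded, and for $l=1$ it is the bounded external input — the arguments of every gate and of $\phi$ lie in a compact set. Continuity of $\sigma$ and $\tanh$ then yields uniform constants $\bar{z} \in (0,1)$ and $\bar{\phi} \in (0,1)$ with $z^l_i \le \bar{z}$ and $|\phi_i| \le \bar{\phi}$ for all times, layers and components. Writing $e^l_i = |\xi^l_i| - 1$ for the excess, substitution gives, for any entry with $e^l_i > 0$,
\[
e^{l,+}_i \le z^l_i\, e^l_i - (1 - z^l_i)(1 - \bar{\phi}) \le \bar{z}\, e^l_i - (1 - \bar{z})(1 - \bar{\phi}).
\]
The first inequality already yields the geometric decay $e^{l,+}_i < \bar{z}\, e^l_i$, proving the exponential convergence of each component into $[-1,1]$ (claim iii); the second exhibits a strictly negative additive drift $-(1-\bar{z})(1-\bar{\phi}) < 0$, so while the excess is positive it drops by at least this fixed amount each step and must vanish after finitely many steps. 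Taking the finite maximum of these times over the finitely many components and invoking invariance of $\Xi$ gives claim ii.

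The step I expect to be delicate is the uniform bounding of the gates: it hinges on the input sequence being bounded, so that $\bar{z} < 1$ and $\bar{\phi} < 1$ can be fixed once and for all. If the input were allowed to grow without bound, the update gate could approach one and the additive drift could vanish, destroying both finite-time and exponential convergence; the non-increasing norm from claim i is precisely what keeps the state contribution to the gate arguments bounded, and the layer coupling must be unwound from $l=1$ upward to certify boundedness of each $v^l$.
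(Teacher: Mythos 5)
Your proof is correct and follows, in all essentials, the argument behind the cited result (note that the paper itself states this lemma without proof, quoting Lemma 4 of \cite{bonassi2020stability}): the convex-combination structure of the GRU update $\xi^{l,+}_i = z^l_i \xi^l_i + (1-z^l_i)\phi_i$, the strict contraction of the excess $|\xi^l_i|-1$ for claim i, and uniform saturation bounds $\bar{z}<1$, $\bar{\phi}<1$ on the gates --- obtained from boundedness of the states and of the layer inputs --- yielding the geometric decay and the strictly negative drift needed for claims iii and ii. Your closing caveat is also well placed: claims ii and iii genuinely require the external input to be bounded, which is the paper's standing normalization $v \in V = [-1,1]^m$, so the lemma's ``any input sequence'' must be read relative to that admissible input set.
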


We remind that a function $\Psi(s)$ is of class $\mathcal{K}_\infty$ if it is monotonically increasing, $\Psi(0) = 0$, and $\Psi(s) \to \infty$ if $s \to \infty$. 
Similarly, $\Psi(s, t)$ is of class $\mathcal{KL}$ if it is $\mathcal{K}_\infty$ with respect to $s$ and $\Psi(s, t) \to 0$ when $t \to \infty$.
The following stability-like notions can hence be formalized.

\begin{definition}[GAS] \label{def:gas}
	System \eqref{eq:rnn:system} is said to be Globally Asymptotically Stable (GAS) if there exist $\beta \in \mathcal{KL}$ such that for any initial condition $\bar{\xi}$ and any time instant $k$
	\begin{equation} \label{eq:rnn:gas_definition}
		\| \xi(k, \bar{\xi}, 0) \|_2 \leq \beta(\| \bar{\xi} \|_2, k),
	\end{equation}
	where $\xi(k, \bar{\xi}, 0)$ is the state trajectory obtained initializing system \eqref{eq:rnn:system} in $\bar{\xi}$ and feeding it with the null input sequence.
\end{definition} 
Note that \eqref{eq:rnn:gas_definition} implies that, for any initial state, the autonomous system underlying \eqref{eq:rnn:system} converges to the origin.
\begin{definition}[$\delta$ISS]
	System \eqref{eq:rnn:system} is Incrementally Input-to-State Stable ($\delta$ISS) if there exists functions $\beta \in \mathcal{KL}$ and $\gamma_v \in \mathcal{K}_\infty$ such that for any pair of initial conditions $\bar{\xi}_a$ and $\bar{\xi}_b$, any pair of input sequences $\bm{v}_{a k} \in \mathcal \mathcal{V}$ and $\bm{v}_{b k} \in \mathcal \mathcal{V}$, and any time instant $k$
	\begin{equation} \label{eq:rnn:deltaiss_definition}
	    \| \xi_a(k, \bar{\xi}_a, \bm{v}_{a k}) - \xi_b(k, \bar{\xi}_b, \bm{v}_{b k} )\|_2 \leq \beta(\| \bar{\xi}_a - \bar{\xi}_b  \|_2, k) + \gamma_v(\| \bm{v}_{a k} - \bm{v}_{b k} \|_{2, \infty}),
	\end{equation}
	where $\xi_*(k, \bar{\xi}_*, \bm{v}_{*k})$ denotes the state of \eqref{eq:rnn:system}, initialized in $ \bar{\xi}_*$ and fed with $\bm{v}_{*k}$, at time $k$. 
	
\end{definition}
Remarkably, the $\delta$ISS property implies that the effects of initial conditions asymptotically vanish, so that the modeling performances are independent of the initialization. 
Moreover, the $\delta$ISS entails that the distance between the state trajectories produced by two different input sequences, $\bm{v}_{a k}$ and $\bm{v}_{b k}$, is bounded by a function $\gamma_v(\|\bm{v}_{a k} - \bm{v}_{b k} \|_{2,\infty})$, which is strictly increasing with the maximum distance between the two inputs.
Hence, the closer the input sequences, the closer the resulting state trajectories.
Lastly, it is worth noticing that if the system is $\delta$ISS it is also GAS.

In the following, we customarily assume that the input of the network is bounded in $V = [-1, 1]^m$, see \cite{goodfellow2016deep}. Then, the following Theorem from \cite{bonassi2020stability} provides a sufficient condition that guarantees the $\delta$ISS of the deep GRU, in terms of its parametrization $\Phi$.

\begin{theorem}[\cite{bonassi2020stability}, Corollary 2] \label{thm:deltaiss}
	The deep GRU network \eqref{eq:rnn:gru} is $\delta$ISS if, for each layer $l \in \{1, ..., M\}$, the weights satisfy the following condition 
	\begin{equation} \label{eq:rnn:deltaiss:condition}
		\| U_r^l \|_\infty \left( \frac{1}{4}  \| U_f^l \|_\infty + \bar{\sigma}_f^l \right) < 1 - \frac{1}{4} \frac{1 + \bar{\phi}_r^l}{1 - \bar{\sigma}_z^l} \| U_z^l \|_\infty,
	\end{equation}
	where
	\begin{subequations} \label{eq:single:deltaiss:bounds_relaxed}
	\begin{align}
		\bar{\sigma}_z^l &= \sigma( \| W_z^l \quad U_z^l \quad b_z^l \|_\infty ),\\
		\bar{\sigma}_f^l &= \sigma( \| W_f^l \quad U_f^l  \quad b_f^l \|_\infty ), \\
		\bar{\phi}_r^l &= \phi( \| W_r^l \quad U_r^l \quad b_r^l \|_\infty ).
	\end{align}
	\end{subequations}
\end{theorem}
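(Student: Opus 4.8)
The plan is to establish the $\delta$ISS of each single GRU layer through a one-step contraction argument in the $\infty$-norm, and then to lift the result to the deep network by a cascade argument. First I would fix a layer $l$, drop the superscript $l$ for readability, and consider two trajectories $(\xi_a, v_a)$ and $(\xi_b, v_b)$, both evolving in the invariant set $\Xi$ and fed with inputs confined to $V = [-1,1]^m$. Writing $\Delta\xi = \xi_a - \xi_b$, $\Delta v = v_a - v_b$, and likewise $\Delta z$, $\Delta f$, $\Delta\phi$ for the differences of the two gates and of the candidate activation $\phi_\star = \phi(W_r v_\star + U_r f_\star\circ\xi_\star + b_r)$, the key algebraic step is the exact decomposition
\[
	\Delta\xi^+ = z_a\circ\Delta\xi + \Delta z\circ(\xi_b - \phi_b) + (1-z_a)\circ\Delta\phi,
\]
which isolates a component-wise convex combination of the \emph{retained-state} direction $\Delta\xi$ (weighted by $z_a$) and the \emph{candidate} direction $\Delta\phi$ (weighted by $1-z_a$).

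Next I would bound each term on the invariant set. Exploiting that $\sigma$ and $\phi$ are globally Lipschitz with constants $\tfrac14$ and $1$ respectively, that $\|\xi_b\|_\infty \le 1$, and that the gate and activation magnitudes are bounded by $\bar\sigma_z$, $\bar\sigma_f$, $\bar\phi_r$ on $\Xi\times V$, I would obtain $\|\Delta z\|_\infty \le \tfrac14(\|U_z\|_\infty\|\Delta\xi\|_\infty + \|W_z\|_\infty\|\Delta v\|_\infty)$, the analogous estimate for $\Delta f$, and---after the product-rule decomposition $\Delta(f\circ\xi) = f_a\circ\Delta\xi + \Delta f\circ\xi_b$---the contraction estimate $\|\Delta\phi\|_\infty \le A\,\|\Delta\xi\|_\infty + (\text{input terms})$ with $A = \|U_r\|_\infty(\bar\sigma_f + \tfrac14\|U_f\|_\infty)$. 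The factor $\|\xi_b - \phi_b\|_\infty \le 1 + \bar\phi_r$ multiplying $\Delta z$ is precisely where the $(1+\bar\phi_r)$ term of the statement originates.

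The crux, and the step I expect to be the main obstacle, is to recombine these estimates without excessive conservatism. Rather than bounding $\|z_a\|_\infty$ and $\|1-z_a\|_\infty$ separately, I would argue component-wise and treat $z_{a,i}\,|\Delta\xi_i| + (1-z_{a,i})\,|\Delta\phi_i|$ as a genuine convex combination; inserting $|\Delta\phi_i| \le A\|\Delta\xi\|_\infty + \dots$ and using that $A<1$ (which is forced by the very inequality we are proving, since its right-hand side does not exceed $1$), this expression is increasing in $z_{a,i}$ and is therefore maximized at $z_{a,i}=\bar\sigma_z$. Collecting terms yields a one-step bound $\|\Delta\xi^+\|_\infty \le \kappa\,\|\Delta\xi\|_\infty + \gamma\,\|\Delta v\|_\infty$ whose contraction rate $\kappa = A + \bar\sigma_z(1-A) + \tfrac14(1+\bar\phi_r)\|U_z\|_\infty$ satisfies $\kappa<1$ \emph{if and only if}, after dividing through by $(1-\bar\sigma_z)$, condition \eqref{eq:rnn:deltaiss:condition} holds. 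This is exactly how the $\tfrac{1}{1-\bar\sigma_z}$ denominator arises, and getting this combination tight (rather than settling for a cruder, non-matching bound) is the delicate point.

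Finally I would assemble the global result. Iterating the one-step inequality produces a geometric bound $\|\Delta\xi(k)\|_\infty \le \kappa^{k}\|\bar\xi_a-\bar\xi_b\|_\infty + \tfrac{\gamma}{1-\kappa}\|\bm{v}_{ak}-\bm{v}_{bk}\|_{\infty,\infty}$, which, after invoking the equivalence of the $2$- and $\infty$-norms and casting the first term as a $\mathcal{KL}$ function and the second as a $\mathcal{K}_\infty$ function, is exactly the $\delta$ISS estimate \eqref{eq:rnn:deltaiss_definition}; trajectories starting outside $\Xi$ are handled by Lemma \ref{lemma:invset}, whose finite-time, componentwise-exponential transient is absorbed into the $\mathcal{KL}$ term. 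Since the deep GRU is the series interconnection of its layers---the input of layer $l$ being the bounded state $\xi^{l-1,+}\in[-1,1]^{n_{l-1}}$ of the preceding layer, so that every layer's gate bounds remain valid---and since a cascade of $\delta$ISS systems is itself $\delta$ISS with comparison functions obtained by composition, the per-layer condition \eqref{eq:rnn:deltaiss:condition} holding for all $l\in\{1,\dots,M\}$ delivers the $\delta$ISS of the whole network.
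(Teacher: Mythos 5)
Your proposal is correct and follows essentially the same route as the proof of the cited result: the paper itself does not prove Theorem~\ref{thm:deltaiss} (it is quoted from \cite{bonassi2020stability}, Corollary~2), and the argument there is precisely a per-layer $\infty$-norm contraction built from the exact decomposition $\Delta\xi^+ = z_a\circ\Delta\xi + \Delta z\circ(\xi_b-\phi_b) + (1-z_a)\circ\Delta\phi$, the Lipschitz constants $\tfrac14$ and $1$ of $\sigma$ and $\phi$, the invariant set $\Xi$, and then a cascade argument exploiting that each layer's input is the unity-bounded state of the preceding layer. Your recombination step correctly recovers $\kappa = \bar\sigma_z + (1-\bar\sigma_z)A + \tfrac14(1+\bar\phi_r)\|U_z\|_\infty$, and $\kappa<1$ is indeed algebraically equivalent to condition \eqref{eq:rnn:deltaiss:condition} after dividing by $1-\bar\sigma_z>0$, so the reconstruction matches the known proof in both structure and constants.
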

\REV{Notably the $\delta$ISS condition reported in Theorem \ref{thm:deltaiss} can be enforced during the training of the network, so as to guarantee its stability, as discussed in the follower sections.}

\REV{
\begin{remark} \label{remark:stability}
For other RNN architectures, such as LSTMs, alternative stability definitions may be considered, such as the Lyapunov-like stability proposed in \cite{miller2018stable}. 
However, while -- similarly to $\delta$ISS -- the stability conditions devised in \cite{miller2018stable} can be enforced during the training procedure, they are limited to single-layer networks. 
\end{remark}}

\section{Internal model control architecture} \label{sec:imc}

\begin{figure}[t]
	\centering
	\includegraphics[width=0.6\linewidth]{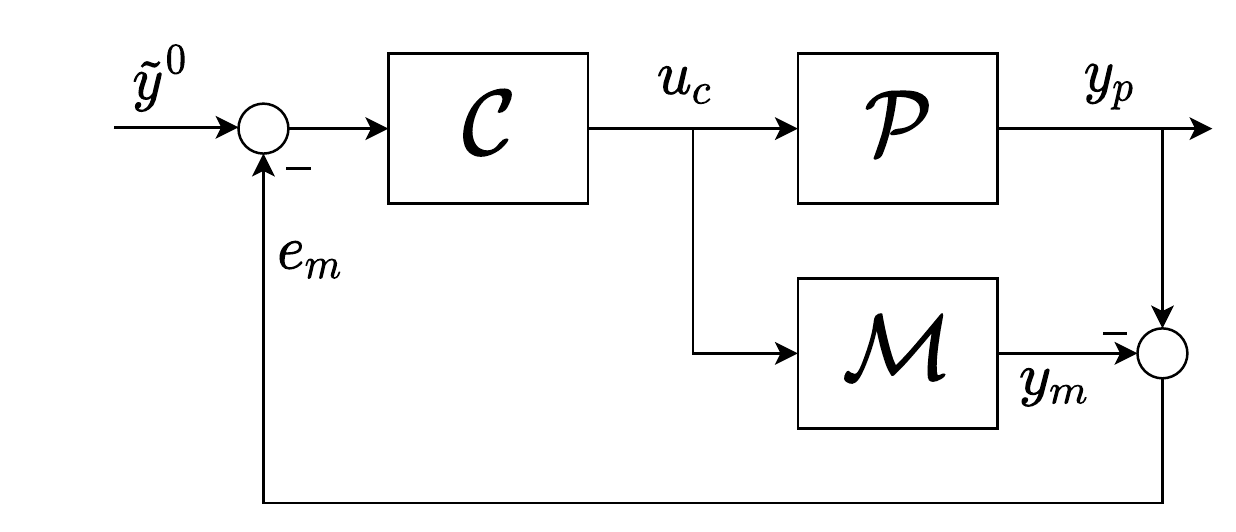}
	\caption{General scheme of IMC.}	
	\label{fig:imcscheme_noF}
\end{figure}

Having introduced GRUs and their stability properties, we can now discuss how they can be suitably employed in the Internal Model Control structure.
The IMC scheme, depicted in Figure \ref{fig:imcscheme_noF}, features three blocks: the unknown plant $\mathcal{P}$, its model $\mathcal{M}$, and the controller $\mathcal{C}$.
The plant $\mathcal{P}$ is assumed to be input-output stable, and it is described by the following (unknown) state-space dynamical system
\begin{equation} \label{eq:imc:model:system}
\mathcal{P}: \,
\begin{dcases}
    x^+ = f_p(x, u) \\
    y_p = g_p(x)
\end{dcases}.
\end{equation}
Let us denote by $y_p(k, \bar{x}, \bm{u}_{k})$ the output of \eqref{eq:imc:model:system}, initialized in $\bar{x}$ and fed with the input sequence $\bm{u}_{k}$.
\REV{For compactness we indicate its output trajectory as  $\bm{y}_{p,k}(\bar{x}, \bm{u}_{k}) = \big\{ y_p(0, \bar{x}, \bm{u}_{0}), \, ..., \,  y_p(k, \bar{x}, \bm{u}_{k}) \big\}$}\footnote{The same notation is adopted for the output trajectory of the model $\mathcal{M}$ and of the controller $\mathcal{C}$, i.e. 
\begin{gather*}
	\bm{y}_{m,k}(\bar{\xi}_m, \bm{u}_{k}) = \big\{ y_m(0, \bar{\xi}_m, \bm{u}_{0}), \, ..., \, y_m(k, \bar{\xi}_m, \bm{u}_{k}) \big\} \\
	\bm{u}_{c, k}(\bar{\xi}_c, \tilde{\bm{y}}^0_{k}) = \big\{ u_c(0, \bar{\xi}_c, \tilde{\bm{y}}^0_{0}), \, ..., \, u_c(k, \bar{\xi}_c, \tilde{\bm{y}}^0_{k}) \big\}
\end{gather*}}.

Ideally, one wants the system's model $\mathcal{M}$ to perfectly match $\mathcal{P}$ from an input-output perspective. 
This means that, letting $y_m(k, \bar{\xi}_m, \bm{u}_{k})$ be the output of the model $\mathcal{M}$ initialized in the state $\bar{\xi}_m$ and fed with the input sequence $\bm{u}_{k}$, then $\mathcal{M} \equiv \mathcal{P}$ if, for any input sequence $\bm{u}_{k}$ and any plant initial state $\bar{x}$, there exists an initial state of the model $\bar{\xi}_m$ such that $y_p(k, \bar{x}, \bm{u}_{k}) = y_m(k, \bar{\xi}_m, \bm{u}_{k})$, for any $k \geq 0$. \smallskip

In the IMC paradigm, the controller block $\mathcal{C}$ is, ideally, the inverse of the model $\mathcal{M}$.
This implies that, for any output reference trajectory $\tilde{\bm{y}}^0_k$, and for any initial condition of the model $\bar{\xi}_m$, there exists an initial state of the controller $\bar{\xi}_c$ such that the control action $u_c$ generated by $\mathcal{C}$ steers the model output to the reference.
More formally, this condition reads as $\bm{y}_{m,k}(\bar{\xi}_m, \bm{u}_{c, k}) = \tilde{\bm{y}}^0$, where $\bm{u}_{c, k}$ denotes the sequence of control actions $u_c(k, \bar{\xi}_c, \tilde{\bm{y}}^0_{k})$ generated by the controller, i.e. the output of the controller $\mathcal{C}$ initialized in $\bar{\xi}_c$ and fed with the reference sequence $\tilde{\bm{y}}^0_{k}$.

In practice, however, $\mathcal{C}$ is synthesized as an approximation of the model inverse, since the exact inverse may be not proper, not analytically defined, or even not stable \cite{morari1989robust}.
Moreover, since a plant-model mismatch may be present, the IMC scheme depicted in Figure \ref{fig:imcscheme_noF} features the modeling error feedback $e_m=y_p - y_m$, see  \cite{morari1989robust}.

In the following we consider a model $\mathcal{M}$ learned by a GRU network, and we discuss how a GRU resembling its inverse can be trained and used as controller $\mathcal{C}$.
Moreover, in the reminder of this work we consider the modified control scheme shown in Figure~\ref{fig:imcscheme}, where a model reference block $\mathcal{M}_r$ has been added \cite{rivals2000nonlinear}. 
This block encodes the desired closed-loop response to the reference signal $y^0$.
Indeed, when the IMC control is perfect (i.e. $y_p = \tilde{y}^0$), the relationship between the reference signal $y^0$ and the system output $y_p$ is exactly the reference model $\mathcal{M}_r$. 

\REV{
\begin{remark} \label{remark:computational}
The IMC synthesis procedure described above takes place offline. Therefore, while training a recurrent network is a computationally-intensive task, it can be carried out on a sufficiently powerful workstation and then deployed to a control system with scarce computational resources. During online operations, the IMC control scheme boils down to the propagation of the model $\mathcal{M}$ and of the controller $\mathcal{C}$ based on the measured output and the filtered reference signal, which, as a sequence of tensor operations, can be efficiently done online.
\end{remark}}

\begin{figure}[t]
	\centering
	\includegraphics[width=0.7\linewidth]{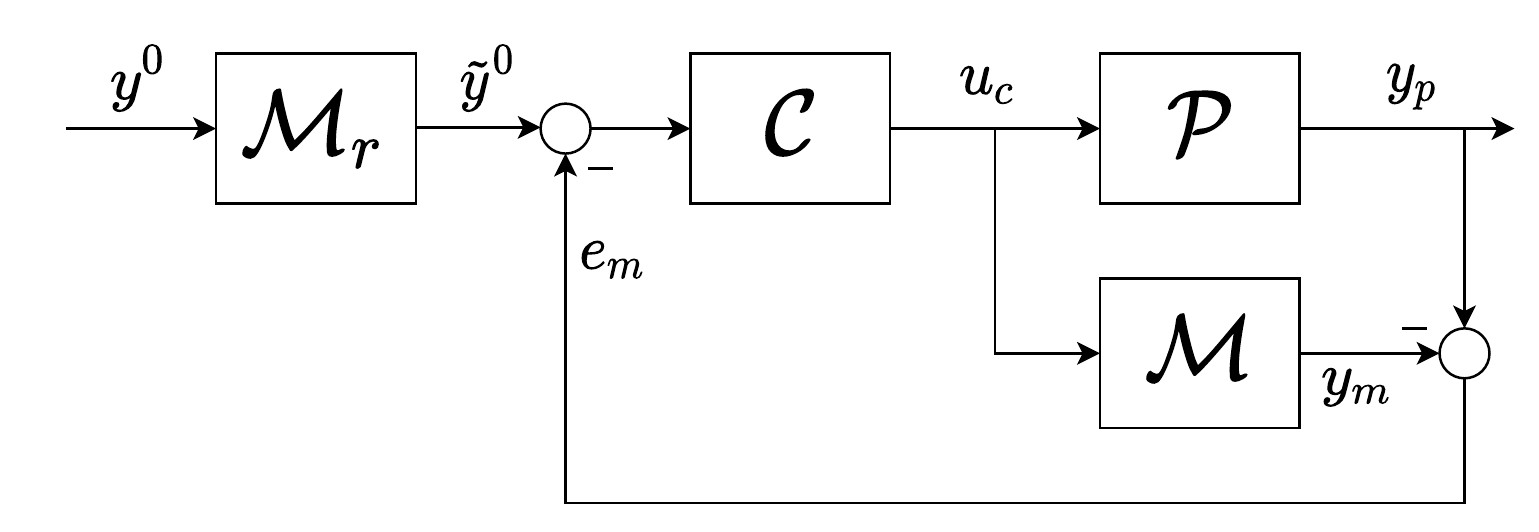}
	\caption{General scheme of IMC with model reference.}	
	\label{fig:imcscheme}
\end{figure}

\subsection{System model identification} \label{sec:imc:model}
\REV{As discussed, the first ingredient for the synthesis of the IMC architecture is the model of the system.}
\REV{In light of the performance of gated recurrent networks, we propose to use them to learn the system model, training the network to approximate the unknown plant from the input-output data collected from it.}
Such model network takes the following form
\begin{subequations} \label{eq:imc:model}
\begin{equation} \label{eq:imc:model:gru}
\mathcal{M}: \,
\begin{dcases}
    \xi_m^+ = f_m(\xi_m, u; \Phi_m) \\
    y_m = g_m(\xi_m; \Phi_m)
\end{dcases}.
\end{equation}
More specifically, the recurrent architecture here considered is a GRU in the form of \eqref{eq:rnn:gru}, with a linear output transformation, i.e. with $\psi = \text{\identity}$:
\begin{equation}
        g_m(\xi_m, \Phi_m) = U_{o, m} \, \xi_m^M  + b_{o,m}, 
\end{equation}
\end{subequations}
where $\xi_{m}^M$ is the state of the last layer.
The input of the model is the control action applied to the plant, $u$, and its output is $y_m$, desirably close to the plant's output $y_p$.
It is customarily assumed that $u$ and $y_m$ are normalized, i.e. they lie in the range $[-1, 1]$. \smallskip

\REV{The learning procedure consists in finding the parametrization $\Phi_m^*$ which minimizes the free-run simulation error $\bm{y}_{m, {\scriptscriptstyle T_s}}(\bar{\xi}_m, \bm{u}_{\scriptscriptstyle T_s}) - \bm{y}_{p, {\scriptscriptstyle T_s}}(\bar{x}, \bm{u}_{\scriptscriptstyle T_s})$}, ideally for any possible input sequence $\bm{u}_{\scriptscriptstyle T_s}$, $T_s$ being the simulation length.
In practice, one performs a limited number of experiments on the unknown plant, and collects $N_s$ input-output sequences from the unknown plant, denoted by $(\bm{u}^{\{i\}}_{\scriptscriptstyle T_s}, \bm{y}^{\{i\}}_{p, {\scriptscriptstyle T_s}})$,  $i \in \{1, ..., N_s\}$.
Note that each sequence $i$,  which has length $T_{s}^{\{i\}}$ \footnote{\REV{For notational simplicity, in the following it is assumed that the training sequences have the same length $T_{s}^{\{i\}}=T_s$.}}, may describe an experiment, or it may be obtained via the Truncated Back-Propagation Through Time (TBPTT) \cite{bianchi2017recurrent}.
In essence, TBPTT consists in extracting partially overlapping subsequences from a longer one, and it allows to significantly improve the performances of the trained network by artificially enlarging the dataset \cite{bianchi2017recurrent}. 

As discussed, the plant $\mathcal{P}$ is assumed to be input-output stable.  
Therefore, to ensure the consistency of the model to the plant, the $\delta$ISS of the model $\mathcal{M}$ is enforced.
We point out that the $\delta$ISS property discussed in Section \ref{sec:rnn} implies the input-output stability of the model \cite{khalil2002nonlinear}, since the output transformation is simply a static \REV{Lipschitz-continuous} transformation.
The condition stated in Theorem \ref{thm:deltaiss} is thus leveraged during the training procedure to ensure the stability of $\mathcal{M}$.
Since most training algorithms are unconstrained, condition \eqref{eq:rnn:deltaiss:condition} is relaxed by penalizing its violation in the loss function, as discussed in \cite{bonassi2020stability}.
Hence, at any iteration of the training algorithm, the loss function $L_m$ -- defined over a batch $\mathcal{I}$, that is, a random subset of sequences -- is minimized.
The loss function considered reads as
\begin{equation} \label{eq:imc:model:loss}
	L_m(\Phi_m) = \sum_{i \in \mathcal{I}} \text{MSE}(\bm{y}_{m, {\scriptscriptstyle T_s}}(\bar{\xi}_m, \bm{u}^{\{i\}}_{\scriptscriptstyle T_s}), \bm{y}^{\{i\}}_{p, {\scriptscriptstyle T_s}})  + \sum_{l=1}^M \rho(\nu^l(\Phi_m)),
\end{equation}
where
\begin{equation} \label{eq:imc:model:mse}
    \text{MSE}(\bm{y}_m, \bm{y}_p) = \frac{1}{T_s - T_{w}}  \sum_{k=T_{w}}^{T_s} \big\| y_m(k) - y_p(k) \big\|_2^2.
\end{equation}
The first term in $L_m$ corresponds to the Mean Square Error (MSE) between the the measured output sequence $\bm{y}^{\{i\}}_{p, {\scriptscriptstyle T_s}}$ and the model's open-loop prediction $\bm{y}_{m, {\scriptscriptstyle T_s}}(\bar{\xi}_m, \bm{u}^{\{i\}}_{\scriptscriptstyle T_s})$, obtained initializing \eqref{eq:imc:model:system} in the random initial state $\bar{\xi}_m$ and applying the input sequence $\bm{u}^{\{i\}}_{\scriptscriptstyle T_s}$.
Note that the output error is not penalized in the first $T_w$ steps, known as the washout period, to accommodate the initial transitory associated with the random initialization of the model. 
The second term of \eqref{eq:imc:model:loss} penalizes the violation of the stability condition \eqref{eq:rnn:deltaiss:condition} for each layer $l \in \{ 1, ..., M \}$, i.e. 
\begin{equation} \label{eq:imc:model:residual}
	\nu^l(\Phi_m) = \| U_r^l \|_\infty \left( \frac{1}{4}  \| U_f^l \|_\infty + \bar{\sigma}_f^l \right) + \frac{1}{4} \frac{1 + \bar{\phi}_r^l}{1 - \bar{\sigma}_z^l} \| U_z^l \|_\infty - 1,
\end{equation}
by means of a monotonically increasing function $\rho$, see \cite{bonassi2020stability}. 
Any training algorithm, such as SGD, Adam, or RMSProp, can be then used to train \eqref{eq:imc:model:system}, i.e. to retrieve a parametrization $\Phi_m^*$ for which the model is suitably accurate \cite{goodfellow2016deep}. 
In the following, we assume that the model $\mathcal{M}$ has been successfully trained and validated according to the proposed procedure, and that Theorem \ref{thm:deltaiss} holds, so that the model is stable.

\subsection{Controller learning} \label{sec:imc:controller}
Having an accurate model of the plant, according to the IMC paradigm, one should now find the right-inverse of the model, which, with a slight abuse of notation, will be denoted by $\mathcal{M}^{-1}$.
Due to the complexity of gated recurrent networks, retrieving an analytical expression of $\mathcal{M}^{-1}$ may not be possible, and its existence may not even be guaranteed. 

\begin{subequations} \label{eq:imc:controller:system}
For this reason, the following gated recurrent network, which approximates the model inverse, is considered
\begin{equation} 
    \mathcal{C}: \,
    \begin{dcases}
        \xi_c^+ = f_c(\xi_c, \tilde{y}^0; \Phi_c) \\
        u_c = g_c(\xi_c; \Phi_c)
    \end{dcases},
\end{equation}
where the input of the controller is the reference trajectory $\tilde{y}^0$ that should be tracked by the IMC scheme, obtained filtering the set-point $y^0$ with the model reference $\mathcal{M}_r$, see Figure \ref{fig:imcscheme}.
The state of the controller is $\xi_c$, and its output is the control signal applied to the system, denoted by $u_c$.
In this paper, we assume that also $\mathcal{C}$ is learned by a GRU network described by \eqref{eq:rnn:gru}, with output transformation
\begin{equation} \label{eq:imc:controller:system:output}
    g_c(\xi_c, \Phi_c) = \phi(U_{o, c} \, \xi_c^M + b_{o, c}),
\end{equation}
where $\xi_c^M$ is the state of the last GRU layer.
\end{subequations}
\REV{A notable property entailed by the structure of $\mathcal{C}$ is that it accounts for actuators' saturation constraints.}
Indeed, since the $\tanh$ activation function is used in the output transformation \eqref{eq:imc:controller:system:output}, the controller's output $u_c$ is guaranteed to lie in $[-1, 1]^m$.
This reflects the assumption that $u_c$ is normalized in such a way that it is unity-bounded, as discussed in Section \ref{sec:imc:model}. 
Other output activation functions, such as  saturations, can however be adopted. \smallskip

To train the controller, we adopt the learning procedure depicted in Figure \ref{fig:inverse},  similar to the procedures proposed in \cite{hunt1991neural, rivals2000nonlinear}.
The goal is to find the parametrization $\Phi_c$ such that the input signal $u_c$ generated by the network steers the model's output $y_m$ as close as possible to the reference $\tilde{y}^0$.
It is worth noticing that, at this stage, the system model has been already identified, and hence its weights $\Phi_m^*$ are fixed.

\begin{figure}[t]
	\centering
	\includegraphics[width=0.7\linewidth]{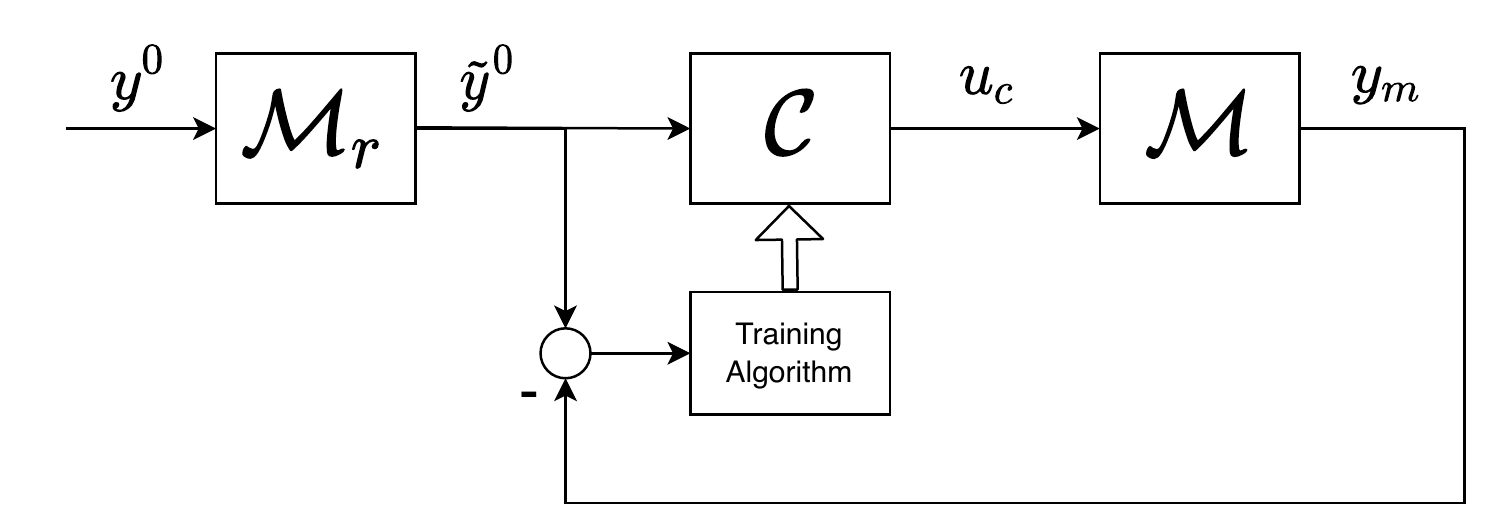}	
	\caption{Scheme of the controller's learning.}
	\label{fig:inverse}
\end{figure}

The dataset used for the controller's training consists of a set of reference signals that the controller learns to track, and it is synthetically produced by generating a suitably large number of piece-wise constant references $y^0$, which are then filtered with the selected model reference $\mathcal{M}_r$.
The closer these references are to those imposed in closed-loop system operation, the more accurate the controller action will be.
Note that the generated references should be feasible for the model, meaning that, at least asymptotically, the difference between the filtered reference $\tilde{y}^0$ and the model output $y_m$ can be made sufficiently small through a suitable tuning of $\Phi_c$.
\REV{To this regard, denote by $\tilde{y}_s^0$ a generic steady-state value of the reference $\tilde{y}^0$.
Then, this feasibility condition boils down to require that the model admits an equilibrium characterized by $\tilde{y}_s^0$ as output}, i.e.  there exists a feasible input $u_s \in [-1, 1]^m$ and a state $\xi_s \in \Xi$ such that $\mathcal{M}$ admits $(u_s, \xi_s, \tilde{y}_s^0)$ as an equilibrium.
This crucial problem is further discussed in Section \ref{sec:example}. \smallskip

The training procedure of the controller $\mathcal{C}$ is carried out as follows.
First, an arbitrarily large amount of suitable reference signals, satisfying the aforementioned feasibility condition, are generated. 
These reference signals, denoted by $\bm{y}^{0, \{i\}}$, are then filtered with the model reference $\mathcal{M}_r$ to obtain $\tilde{\bm{y}}^{0, \{i\}}$.
These filtered references constitute the controller's training set.
At each training epoch, these sequences are  randomly divided in batches, and the loss function $L_c$ is minimized over each batch $\mathcal{I}$. 
The proposed loss function is 
\begin{equation}\label{eq:imc:controller:loss}
    L_c(\Phi_c) = \sum_{i \in \mathcal{I}} \text{MSE} \Big( \bm{y}_{m, {\scriptscriptstyle T_s}}(\bar{\xi}_m, \bm{u}_{c, {\scriptscriptstyle T_s}}(\bar{\xi}_c, \tilde{\bm{y}}^{0, \{i\}}_{\scriptscriptstyle T_s})), \, \tilde{\bm{y}}^{0, \{i\}}_{\scriptscriptstyle T_s} \Big) + \sum_{l=1}^M \rho(\nu^l(\Phi_c)).
\end{equation}
The first term is the mean square nominal output tracking error, that is, the MSE between the filtered output reference $\tilde{y}^0$ and the output of the model $\mathcal{M}$ controlled by the $\mathcal{C}$, see Figure \ref{fig:inverse}.
The second term, as discussed in Section \ref{sec:imc:model}, allows to fulfill the controller's $\delta$ISS condition \eqref{eq:rnn:deltaiss:condition}.
\REV{Any training algorithm, such as Adam or RMSProp, can be used to train $\mathcal{C}$ by minimizing $L_c$}, thus retrieving the controller's weights $\Phi_c^*$ that make the nominal output tracking error over the validation dataset sufficiently small.

At this stage, the control system has been entirely learned from the data. 
In the following Section the stability properties of the proposed control scheme are discussed. 

\section{Stability properties} \label{sec:stability}
Owing to its particular structure, the IMC scheme enjoys the following closed-loop properties \cite{hunt1991neural}.

\begin{property}[Stability \cite{economou1986internal}] \label{property:stability}
If the plant and the controller are input-output stable and the model is exact, the closed-loop system is input-output stable. 
\end{property}
Note that, in absence of output noise, if $\mathcal{M} = \mathcal{P}$ the modeling error feedback is null, i.e. $e_m = 0$, and the IMC controller operates in open-loop. 
Thus, if $\mathcal{C}$ and $\mathcal{P}$ are input-output stable, the overall scheme is input-output stable as well.
A $\delta$ISS controller, trained as discussed in Section \ref{sec:imc:controller}, is hence able to guarantee the closed-loop stability.

\begin{property} [Perfect Control \cite{economou1986internal}] \label{property:perfect_control}
Assume that the plant is input-output stable, that the model is exact, and that the controller matches the model's inverse, i.e. $\mathcal{C} = \mathcal{M}^{-1}$. Then, if $\mathcal{C}$ is input-output stable, the closed-loop is input-output stable and matches the model reference $\mathcal{M}_r$.
\end{property}
Indeed, under the assumption of exact model availability, the control system operates in open-loop.
Since $\mathcal{C} = \mathcal{\mathcal{M}}^{-1} = \mathcal{P}^{-1}$, it holds that $y_p = \tilde{y}^0$, and hence the relationship between $y^0$ and $y_p$ is the model reference $\mathcal{M}_r$.

\begin{property} [Zero Offset \cite{economou1986internal}] \label{property:offset_free}
Assume that the plant is input-output stable, that the model is exact and admits an inverse, and that the steady state control action generated by the controller matches the steady-state value of the model's inverse.
Then, if the controller is input-output stable, offset-free tracking is asymptotically attained.
\end{property}
This last property means that, owing to the stability of $\mathcal{M}$ and $\mathcal{C}$, if the controller matches the inverse operator of the model -- at least at steady state --, then $y_m(k, \bar{\xi}_m, \bm{u}_{c, k}( \bar{\xi}_c, \tilde{\bm{y}}^{0}_{k})) \xrightarrow[k \to \infty]{} \tilde{y}^0(k)$, i.e. offset-free control is achieved.

While these properties are remarkable, in practice it is very hard to guarantee the absence of a plant-model mismatch and the exactness of the model's inverse.
In the following, we thus show the closed-loop input-output stability in non-ideal cases, where $\mathcal{M}$ is not exact and $\mathcal{C}$ does not match the model's inverse.

\begin{proposition}
Assume that the plant $\mathcal{P}$ \eqref{eq:imc:model:system} can be described by the equations of the model $\mathcal{M}$ with a fictitious additive disturbance $d$, which accounts for the plant-model mismatch
\begin{equation}
    \mathcal{P}: \, \begin{dcases}
        x^+ = f_m(x, u; \Phi_m^*) \\
        y_p = g_m(x; \Phi_m^*) + d
    \end{dcases}.
\end{equation}
Then, if the model $\mathcal{M}$ and the controller $\mathcal{C}$ are $\delta$ISS, the closed-loop IMC scheme depicted in Figure \ref{fig:imcscheme} is input-output stable with respect to the reference trajectory $\tilde{\bm{y}}^0$.
\end{proposition}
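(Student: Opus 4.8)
The plan is to show that the hypothesis $\mathcal{P}=\mathcal{M}+d$ collapses the apparent feedback interconnection of Figure~\ref{fig:imcscheme} into a cascade of two $\delta$ISS blocks driven by the bounded exogenous signals $\tilde{\bm{y}}^0$ and $\bm{d}$, and then to chain the associated gains. The decisive observation is that, since the plant and the model share the same maps $f_m(\cdot,\cdot;\Phi_m^*)$, $g_m(\cdot;\Phi_m^*)$ and are fed the \emph{same} control action $u_c$, the modeling-error feedback $e_m=y_p-y_m$ does not transmit a genuine loop: it reduces to the disturbance $d$ plus a term that decays on its own.

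First I would study the plant--model state error $\Delta(k):=x(k)-\xi_m(k)$. As both $x$ and $\xi_m$ obey the recursion $\chi^+=f_m(\chi,u_c;\Phi_m^*)$ with the identical input $u_c$, the $\delta$ISS property of $\mathcal{M}$ applied to this pair of trajectories makes the input-difference gain $\gamma_v$ vanish, leaving
\[
\|\Delta(k)\|_2\le \beta_m(\|\bar{x}-\bar{\xi}_m\|_2,\,k),
\]
a bound that is \emph{uniform} in the (common) input sequence $u_c$ and tends to zero. Since the model output map $g_m$ is linear, hence Lipschitz with some constant $L_{g_m}$, the modeling error obeys $e_m(k)=d(k)+\delta(k)$ with $\|\delta(k)\|_2\le L_{g_m}\,\beta_m(\|\bar{x}-\bar{\xi}_m\|_2,k)$. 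Thus $e_m$ equals the disturbance up to a uniformly decaying transient, and is bounded whenever $\bm{d}$ is.

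Next I would propagate boundedness along the cascade $\tilde{\bm{y}}^0\to u_c\to y_p$. The controller is driven by the filtered reference corrected by the error feedback, $\tilde{y}^0-e_m=\tilde{y}^0-d-\delta$, which is bounded because $\tilde{\bm{y}}^0$ and $\bm{d}$ are bounded and $\delta$ vanishes; crucially, its norm is dominated \emph{a priori} by $\|\tilde{\bm{y}}^0\|_{2,\infty}+\|\bm{d}\|_{2,\infty}+L_{g_m}\beta_m(\|\bar{x}-\bar{\xi}_m\|_2,0)$, and is therefore not amplified around the loop. The $\delta$ISS of $\mathcal{C}$ then yields an ISS-type bound on $\xi_c$ (by comparing its trajectory with the zero-input one), while the $\tanh$ output transformation~\eqref{eq:imc:controller:system:output} guarantees $u_c\in[-1,1]^m$ unconditionally. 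Feeding this bounded $u_c$ into $\mathcal{M}$, whose state enters and remains in the invariant set $\Xi$ in finite time (Lemma~\ref{lemma:invset}) and which is $\delta$ISS, keeps $\xi_m$ and hence $y_m=g_m(\xi_m)$ bounded; the identical argument bounds the plant state $x\in\Xi$, so that $y_p=g_m(x)+d$ is bounded. Composing the $\mathcal{K}_\infty$ gains of $\mathcal{C}$ and $\mathcal{M}$ along the cascade produces a finite-gain estimate of the form $\|\bm{y}_{p}\|_{2,\infty}\le \gamma(\|\tilde{\bm{y}}^0\|_{2,\infty})+c$, i.e. input-output stability with respect to $\tilde{\bm{y}}^0$, in the spirit of Property~\ref{property:stability}.

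The step I expect to be the main obstacle is making rigorous the claim that the error feedback is not destabilizing. The subtlety is that $e_m$ does close a loop ($u_c\to x,\xi_m\to y_p,y_m\to e_m\to u_c$), so one must argue that the effective loop gain on the reference/output channel is zero: this is exactly what the $\delta$ISS bound on $\Delta$ provides, since $e_m-d=g_m(x)-g_m(\xi_m)$ is controlled by $\beta_m(\|\bar{x}-\bar{\xi}_m\|_2,k)$ \emph{uniformly over all} admissible $u_c$, so the only information the feedback carries is the exogenous $d$ and a self-decaying transient. A secondary technical point is to ensure that the corrected reference $\tilde{y}^0-e_m$ remains in the admissible input set on which the $\delta$ISS of $\mathcal{C}$ is certified. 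Once the loop-breaking is established, the remainder is the routine composition of $\mathcal{KL}$ and $\mathcal{K}_\infty$ estimates.
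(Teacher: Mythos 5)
Your proposal is correct, and its decisive step coincides with the paper's proof: by the $\tanh$ output transformation \eqref{eq:imc:controller:system:output} (together with Lemma \ref{lemma:invset} for the controller state), the control action satisfies $u_c(k) \in [-1,1]^m$ \emph{unconditionally}, i.e. irrespective of what signal enters $\mathcal{C}$; feeding this bounded input to the stable $\mathcal{P}$ and $\mathcal{M}$ bounds $y_p$ and $y_m$ (hence $d$), and the closed loop is input-output stable. The paper's entire proof consists of exactly these two steps. What you present as the ``decisive observation'' --- the decomposition $e_m = d + \delta$ with $\|\delta(k)\|_2 \le L_{g_m}\,\beta_m(\|\bar{x}-\bar{\xi}_m\|_2,k)$, obtained by applying the $\delta$ISS of the shared dynamics to the two trajectories driven by the same $u_c$ --- is correct but is not what breaks the loop: the loop is already broken at the controller output, since $u_c$ is unity-bounded no matter how large $\tilde{y}^0 - e_m$ becomes, and this also disposes of your ``secondary technical point'' about the corrected reference remaining in the input set on which the controller's $\delta$ISS is certified. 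The paper states your $e_m \to d$ property separately, as a remark immediately after the proof, where it serves to interpret the feedback signal rather than to establish stability. In short, your argument buys a sharper characterization of the modeling-error feedback (and would be the natural scaffolding for a quantitative finite-gain estimate), at the price of doing more work than the statement requires.
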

\begin{proof}
    Owing to Lemma \ref{lemma:invset}, and in light of the output transformation of $\mathcal{C}$ defined in \eqref{eq:imc:controller:system}, the control variable $u$ is unity-bounded, i.e. $u(k) \in [-1, 1]^m$.
    Therefore, since both $\mathcal{P}$ and $\mathcal{M}$ are stable, then $y_p$ and $y_m$ are bounded, which implies that $d$ is bounded as well. Hence the system is input-output stable.
\end{proof}
It is worth noticing that the modeling error feedback $e_m$, defined as
\begin{equation}
    e_m(k) = y_p(k, \bar{x}, \bm{u}_k) -   y_m(k, \bar{\xi}_m, \bm{u}_k),
\end{equation}
converges to the bounded disturbance $d$ that represents the plant-model mismatch, since the effect of the different initial conditions ($\bar{x}$ and $ \bar{\xi}_m$) is guaranteed to asymptotically vanish by the $\delta$ISS of $\mathcal{M}$.

\section{Numerical example} \label{sec:example}
\subsection{Benchmark description}
\begin{figure}
    \centering
    \includegraphics[width=0.6 \linewidth]{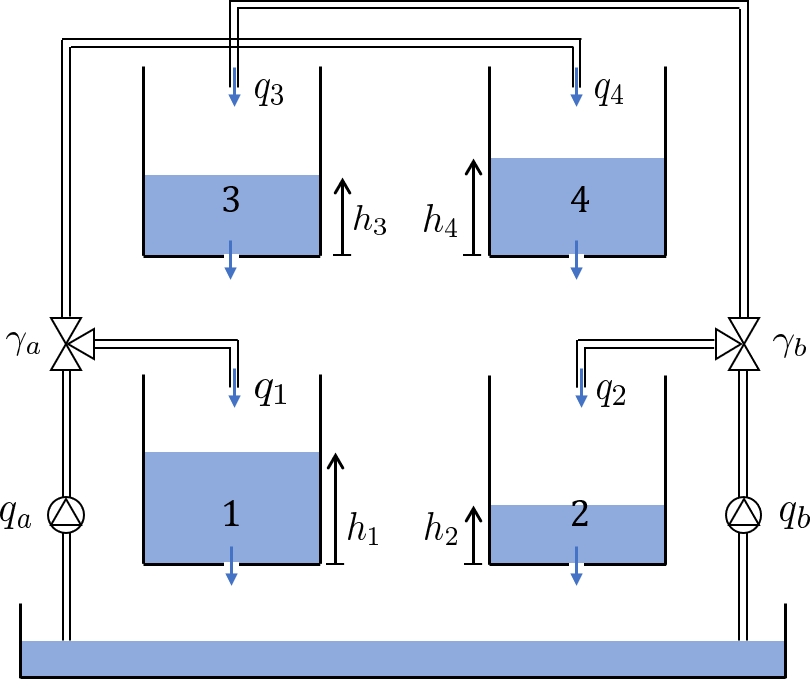}
    \caption{Quadruple tank system \cite{bonassi2020stability}}
    \label{fig:qt}
\end{figure}
The performances of the proposed control scheme have been assessed on the Quadruple Tank system reported in \cite{alvarado2011comparative}.
The system, depicted in Figure \ref{fig:qt}, consists in four tanks containing water, whose levels are denoted by $h_1$, $h_2$, $h_3$, and $h_4$, which are fed with two controllable pumps.
Specifically, two triple valves split the flow rate $q_a$ in $q_1 = \gamma_a q_a$ and $q_3 = (1 - \gamma_a) q_a$, and the flow rate $q_b$ in $q_2 = \gamma_b q_b$ and $q_4 = (1 - \gamma_b) q_b$.
The equations of the system are
\begin{subequations} \label{eq:example:qt}
\begin{equation}
    \begin{aligned}
    \dot{h}_1 &= - \frac{a_1}{S} \sqrt{2gh_1} + \frac{a_3}{S} \sqrt{2gh_3} + \frac{\gamma_a}{S} q_a, \\
	\dot{h}_2 &= - \frac{a_2}{S} \sqrt{2gh_2} + \frac{a_4}{S} \sqrt{2gh_4} + \frac{\gamma_b}{S} q_b, \\
	\dot{h}_3 &= - \frac{a_3}{S} \sqrt{2gh_3} + \frac{1 - \gamma_b}{S} q_b, \\
	\dot{h}_4 &= - \frac{a_4}{S} \sqrt{2gh_4} + \frac{1 - \gamma_a}{S} q_a,
    \end{aligned}	
\end{equation}
where the parameters of the system have been reported in Table \ref{tab:qt_parameters}. 
The water levels, as well as the control variables, are also subject to saturation limits
\begin{equation} \label{eq:example:qt:bounds}
\begin{aligned}
    h_i &\in [h_i^{min}, h_i^{max}] \quad \forall i \in \{ 1, ..., 4 \},  \\
    q_a &\in [q_a^{min}, q_a^{max}], \\
    q_b &\in [q_b^{min}, q_b^{max}].
\end{aligned}
\end{equation}
\end{subequations}
In the following it is assumed that only $h_1$ and $h_2$ are measurable, i.e. the output of the system is $y_p = [h_1, h_2]^\prime$, while the input of the system is $u = [q_a, q_b]^\prime$.
The control goal is to steer the system's output $y_p$ to the reference $y^0$ mimicking the response of the reference model $\mathcal{M}_r$.

As discussed in Section \ref{sec:imc}, the synthesis of an IMC regulator is articulated in the following steps: \emph{(i)} learning a model $\mathcal{M}$ of the system; \emph{(ii)} generating a dataset of feasible reference trajectories; \emph{(iii)} learning a controller $\mathcal{C}$ which approximates the model's inverse. 
\REV{In the following subsections, these three steps are tackled.}

\begin{table}
	\centering
	\caption{Benchmark system parameters}
	\label{tab:qt_parameters}
		\begin{tabular}{cccc|cccc}
		\toprule
		Parameter & Value & Units & $\,\,\,$ & $\,\,\,$ & Parameter & Value & Units \\
		\midrule 
		$a_1$ & $1.31 \cdot 10^{-4}$ & $\text{m}^2$  &&& $[h_1^{min}, h_1^{max}]$ & $[0, 1.36]$ & m \\
		$a_2$ & $1.51 \cdot 10^{-4}$ & $\text{m}^2$  &&& $[h_2^{min}, h_2^{max}]$ & $[0, 1.36]$ & m \\
		$a_3$ & $9.27 \cdot 10^{-5}$ & $\text{m}^2$  &&& $[h_3^{min}, h_3^{max}]$ & $[0, 1.3]$ & m \\
		$a_4$ & $8.82 \cdot 10^{-5}$ & $\text{m}^2$  &&& $[h_4^{min}, h_4^{max}]$ & $[0, 1.3]$ & m \\
		$S$ & $0.06$ & $\text{m}^2$ &&& $[q_a^{min}, q_a^{max}]$ & $[0, \, 9\cdot 10^{-4}]$ & $\frac{\text{m}^3}{s}$\\
		$\gamma_a$ & $0.3$ &  &&& $[q_b^{min}, q_b^{max}]$ & $[0, \, 1.3\cdot 10^{-3}]$ & $\frac{\text{m}^3}{s}$\\
		$\gamma_b$ & $0.4$ & &&& & & \\
		\bottomrule
		\end{tabular}
\end{table}

\subsection{Model training}
The Quadruple Tank system described by \eqref{eq:example:qt} has been implemented in MATLAB.
In order to retrieve the data required for the model's training, the system has been fed with Multilevel Pseudo-Random Signals (MPRS) as inputs to properly excite the system and collect data in a broad operating region.
According to the TBPTT paradigm, $N_s = 200$ random partially-overlapping sequences have been extracted from the experiment.
Each pair of input-output sequences $(\bm{u}^{\{i\}}_{\scriptscriptstyle T_s}, \bm{y}^{\{i\}}_{p, {\scriptscriptstyle T_s}})$ is made by $T_s =700$ data-points, collected with a sampling time $\tau_s = 25s$.
The data has been suitably normalized, so that \eqref{eq:example:qt:bounds} translates into the unity-boundedness of $y_p$ and $u$.

A deep GRU with $M=2$ layers, made by $n_l=10$ units each, has been used to learn the system model.
The training procedure discussed in Section \ref{sec:imc:model} has been carried out with TensorFlow 1.15 on Python 3.7, using RMSProp \cite{goodfellow2016deep} to minimize the loss function $L_m(\Phi_m)$ defined in \eqref{eq:imc:model:loss}, thus retrieving $\Phi_m^*$.
It is worth recalling that the $\delta$ISS property  has been enforced by penalizing the violation of the $\delta$ISS condition \eqref{eq:imc:model:residual}.
This is achieved using a piece-wise linear cost $\rho(\nu^l(\Phi_m))$ which steers $\nu^l(\Phi_m)$, $l \in \{ 1, ..., M \}$, to some sufficiently small negative value, see \cite{bonassi2020stability}.
\\

At each training epoch, the set of $N_s$ sequences is randomly split into several batches, with respect to which the optimizer tunes the network's parameters.
After each epoch, the performances of the network have been evaluated on $25$ validation sequences extracted from an independent experiment. 
The network training was halted when the model's performance on the validation set stopped improving, thus obtaining the model's weights $\Phi_m^*$.

\begin{figure}[ht!]
	\centering
	\includegraphics[width=0.8 \linewidth]{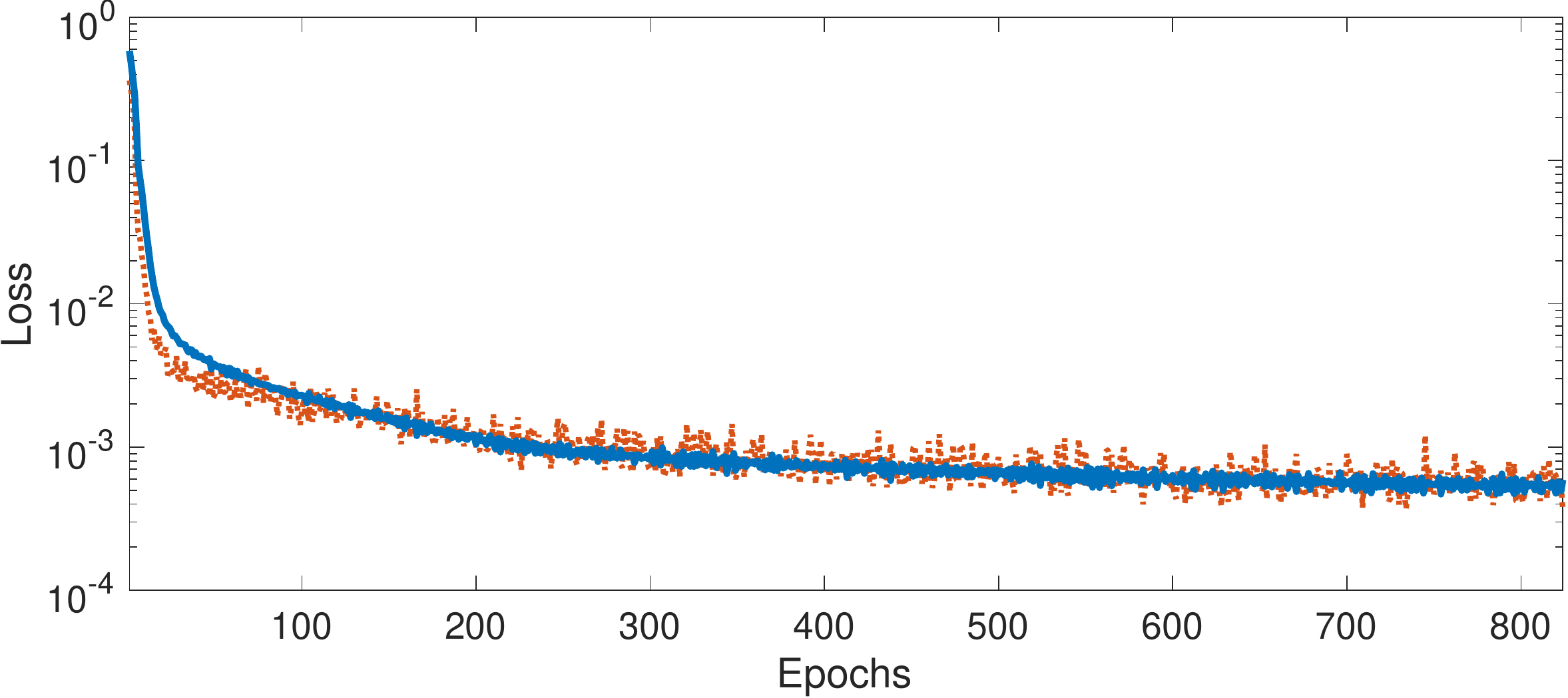}
	\caption{Model training: evolution of the average loss function across all the batches (blue line) and MSE on the validation dataset (red dotted line).}
	\label{fig:example:model_loss}
	\vspace{2mm}
	\centering
	\includegraphics[width=0.8 \linewidth]{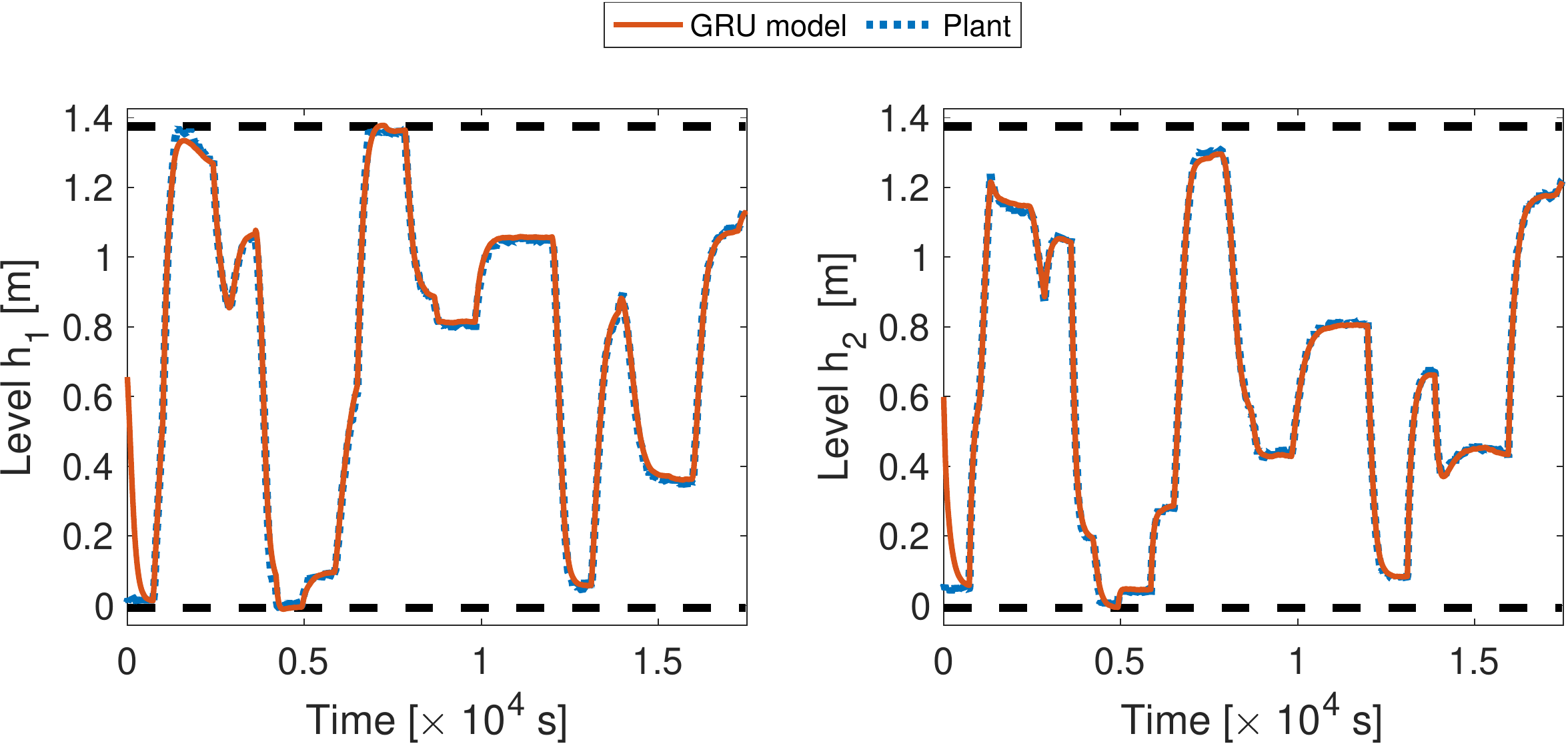}
	\caption{Performances of the trained model on an independent test set: GRU open-loop prediction (red line) compared to the plant's output (blue dotted line). Level $h_1$ is displayed on the left, level $h_2$ on the right.}
	\label{fig:example:model_test}
\end{figure}

The evolution of the loss function is shown in Figure \ref{fig:example:model_loss}. 
Overall, the training procedure took $823$ epochs.
Eventually, the modeling performances have been tested on an independent test sequence, leading to the satisfactory results depicted in Figure \ref{fig:example:model_test}.
These performances have also been quantified using the so-called FIT index $[ \% ]$, defined as
\begin{equation}
    \text{FIT} = 100 \left( 1 - 
    \sqrt{\frac{\text{MSE}(\bm{y}_{m, {\scriptscriptstyle T_s}} (\bar{\xi}_m, \bm{u}^{\{ts\}}_{\scriptscriptstyle T_s}), \bm{y}^{\{ ts \}}_{p, {\scriptscriptstyle T_s}})}{\text{MSE}(\bm{y}^{\{ ts \}}_{p, {\scriptscriptstyle T_s}}, y^{\{ts\}}_{avg})}} \, \right),
\end{equation}
where $(\bm{u}^{\{ts\}}_{\scriptscriptstyle T_s}, \bm{y}^{\{ts\}}_{p, {\scriptscriptstyle T_s}})$ is the test sequence and $y^{\{ts\}}_{avg}$ the output average value.
The trained model scores $FIT = 96.5 \%$, which indicates remarkable modeling performances.

\subsection{Feasible set-points generation}
After learning the model, the dataset used to train the controller network $\mathcal{C}$ has been generated.
As discussed in Section \ref{sec:imc:controller}, this dataset consists in a set of reference trajectories that the controller should learn to track -- no data needs to be collected from the real system at this stage.
These reference trajectories are rather obtained by filtering MPRB signals, denoted by $\bm{y}^0_{\scriptscriptstyle T_s}$, with the reference model $\mathcal{M}_r$ herein chosen as the discrete-time equivalent of a pair of decoupled first-order systems with unitary static gain and time constant $\tau_r = 2000s$.

For successful controller training, the dataset must be properly generated.
Indeed, the inclusion of unfeasible set-points $y^0(k)$, i.e. set-points which (given the input constraints) do not correspond to any feasible equilibrium of the model, would inevitably alter the loss function's gradient,  leading to poor performances of the trained controller. 

\begin{figure}[t]
	\centering
	\includegraphics[width=0.6\linewidth, clip, trim=0cm 0mm 0cm 5mm]{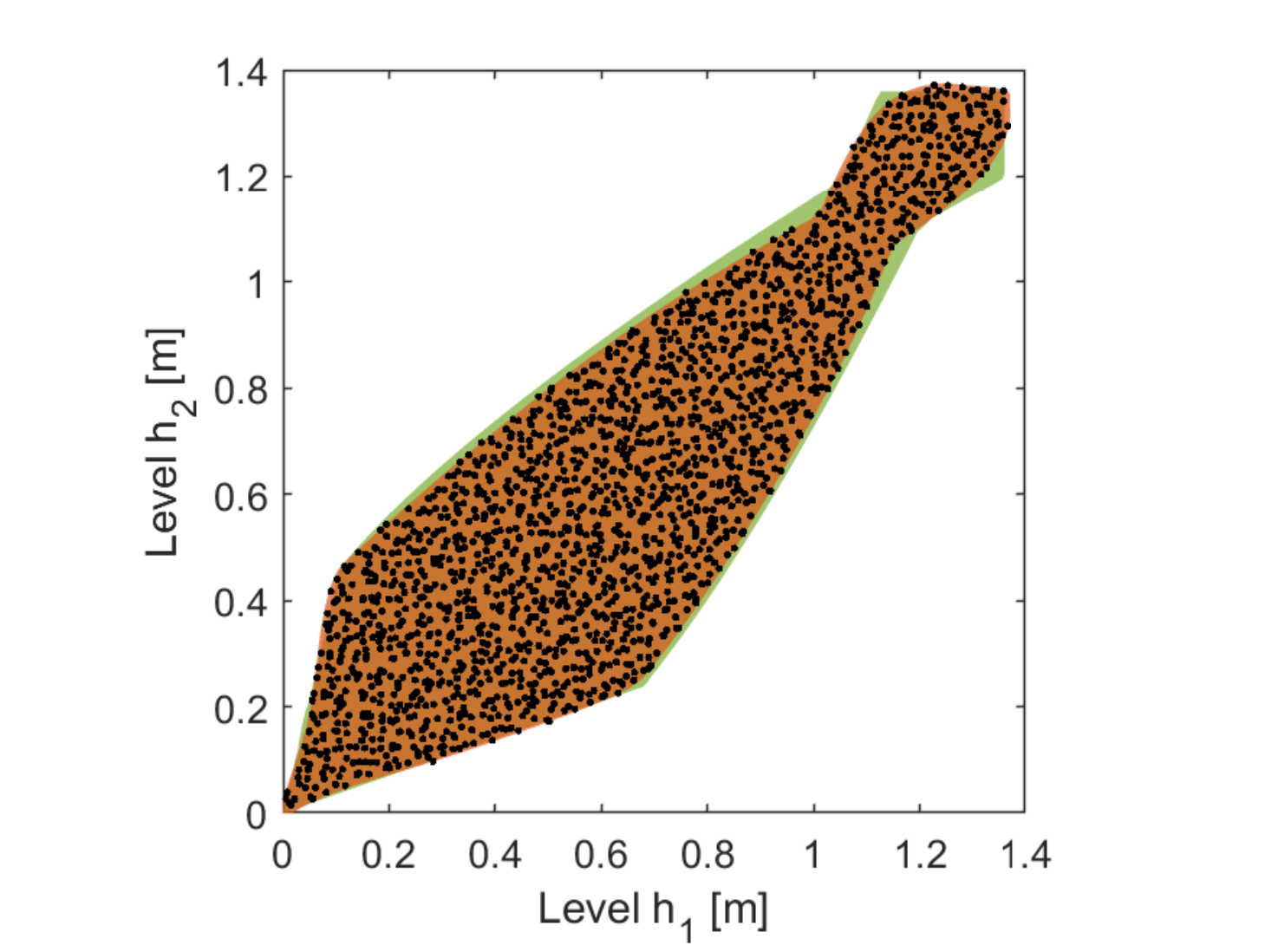}
	\caption{Controller dataset generation: set-points (black dots) are extracted from the set of model's feasible outpu set-points (orange area); the green area corresponds to the set of plant's feasible outputs, generally unknown.}
	\label{fig:example:controller_dataset_ss}
\end{figure}

In light of the model's $\delta$ISS, checking the existence of an input sequence that steers the model to the set-point reduces to assess the existence of some feasible constant input $u_s$ and some state $\xi_s \in \Xi$ such that $(u_s, \xi_s, y^0)$ is an equilibrium of the model, i.e.
\begin{equation} \label{eq:example:controller:eq_search}
\begin{dcases}
    \xi_s = f_m(\xi_s, u_s; \Phi_m^*) \\
    y^0 = g_m(\xi_s; \Phi_m^*)
\end{dcases}.
\end{equation}
Moreover, the $\delta$ISS property guarantees that, if such $u_s$ exists, it is unique and that the equilibrium $(u_s, \xi_s, y^0)$ can be reached from any initial state of the model.
Hence, this property further allows to relieve the computational complexity of solving \eqref{eq:example:controller:eq_search}, since initial guesses of $u_s$ and $\xi_s$ -- to be used for warm-starting the nonlinear solver -- can be easily retrieved from open-loop simulations of \eqref{eq:imc:model:system}.

Overall, $N_s = 430$ reference trajectories $\bm{y}^{0, \{ i \}}_{\scriptscriptstyle T_s}$  have been generated, $380$ of which used for training, $40$ used for validation, and $10$ for the final testing. 
Each reference trajectory consists in sequences of random steps which satisfy the aforementioned feasibility condition.
In Figure \ref{fig:example:controller_dataset_ss} the extracted random set-points are compared to the set of feasible model outputs, and to the set of feasible outputs of the real plant.
This latter is generally unknown, and is here reported just for comparison purposes.
The MPRB references are then filtered with the model reference $\mathcal{M}_r$ to obtain the reference trajectories $\tilde{\bm{y}}^{0, \{i\}}_{\scriptscriptstyle T_s}$, used for the training of the controller GRU network.

\begin{figure}[ht!]
	\centering
	\includegraphics[width=0.8\linewidth]{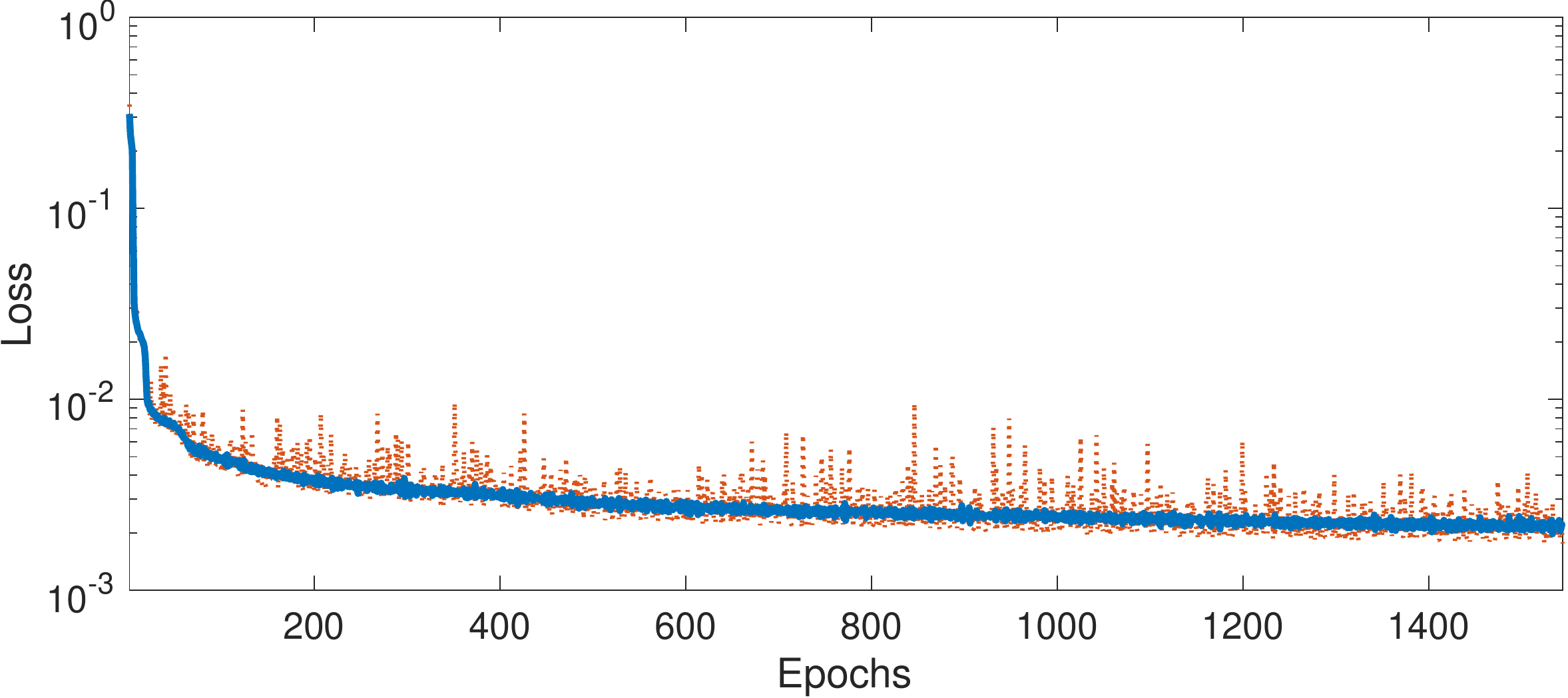}
	\caption{Controller training: evolution of the average loss function across all the batches (blue line) and MSE on the validation dataset (red dotted line).}
	\label{fig:example:controller_learning}
	\vspace{2mm}
	\centering
	\includegraphics[width=0.8\linewidth]{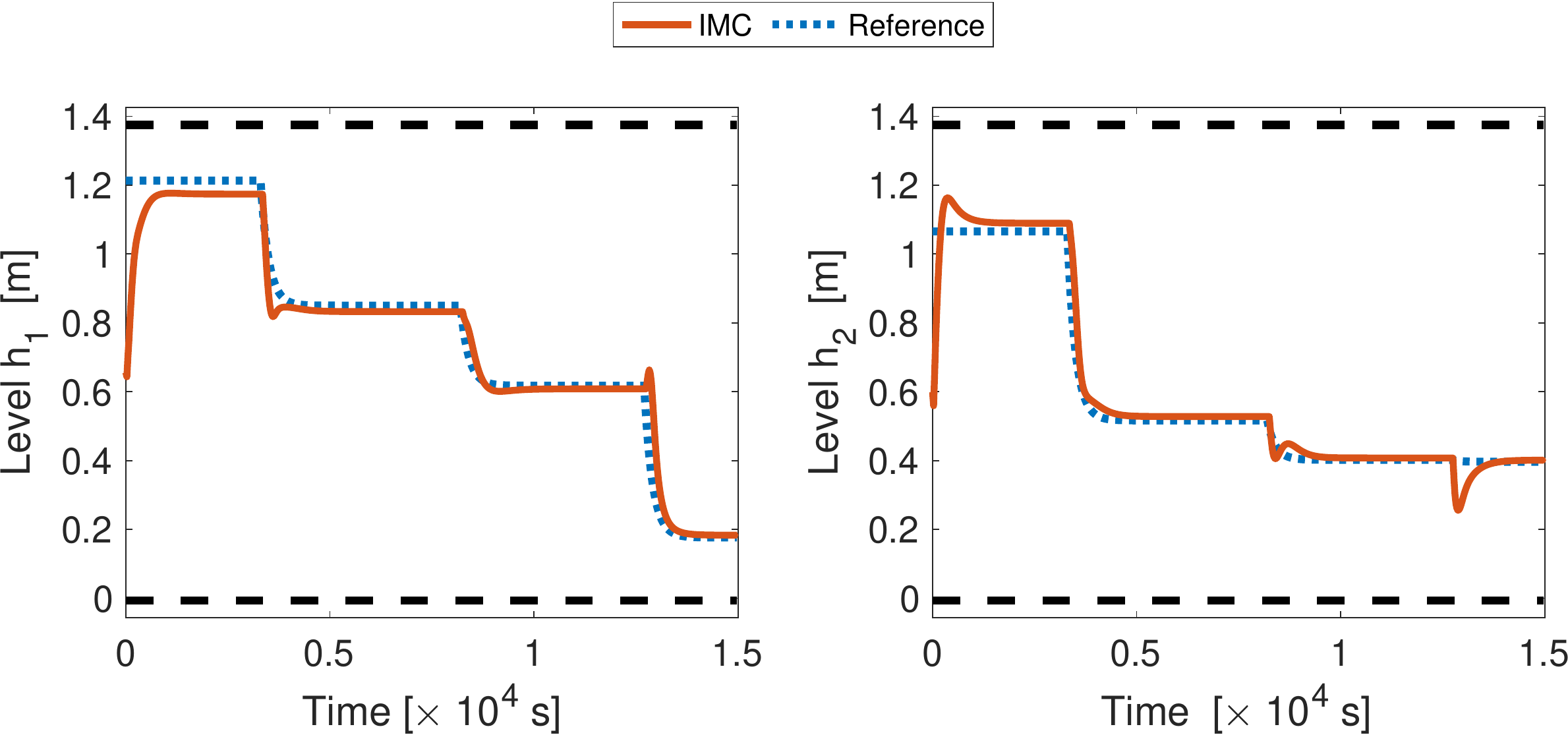}
	\caption{Performances of the controller on a reference trajectory extracted from the independent test set: output reference (blue dotted line) versus IMC-controlled output (red line). The level $h_1$ is displayed on the left, the level $h_2$ on the right.}
	\label{fig:example:controller_test}
\end{figure}


\subsection{Controller training}
The controller is learned by a deep GRU with $M=3$ layers, each one featuring $n_l = 5$ units.
The training procedure proposed in Section \ref{sec:imc:controller} has been carried out with TensorFlow 1.15, using RMSProp as optimizer.
As for the training of the model, the $\delta$ISS property of the controller network has been enforced using a piece-wise linear cost $\rho(\nu^l)$ which penalizes the violation of the $\delta$ISS condition \eqref{eq:rnn:deltaiss:condition}.

At each epoch of the training procedure, the set of references is randomly divided into batches, with respect to which the optimizer tunes the weights of the controller network so as to minimize the loss function $L_c(\Phi_c)$ \eqref{eq:imc:controller:loss}.
After each epoch, the performance metrics have been evaluated on the validation set, and the training was halted when they stop improving, obtaining $\Phi_c^*$.
The evolution of the loss function throughout the controller training procedure, which took $1543$ epochs, is shown in Figure \ref{fig:example:controller_learning}.

Eventually, the controller's performances have been tested on the independent test-set's reference trajectories.
The controller's open-loop performances are depicted in Figure \ref{fig:example:controller_test}, where the IMC-controlled output $\bm{y}_{m, {\scriptscriptstyle T_s}}(\bar{\xi}_m, \bm{u}_{c, {\scriptscriptstyle T_s}}(\bar{\xi}_c, \tilde{\bm{y}}^{0, \{i\}}_{\scriptscriptstyle T_s}))$ is compared to its reference $\tilde{\bm{y}}^{0, \{i\}}_{\scriptscriptstyle T_s}$.
Note that, in addition to limited steady-state tracking errors, cross-coupling effects between the two outputs are present.
These effects, albeit quickly compensated, may deteriorate the FIT index, yet a remarkable $FIT = 87 \%$ is scored for the reported test reference trajectory.

\subsection{Closed-loop performances} \label{sec:results}
Eventually, the closed-loop performances of the proposed control architecture have been tested on the simulated Quadruple Tank system, where the outputs have been corrupted by a white Gaussian noise with standard deviation $0.01$.

\begin{figure}[t]
    \centering
	\includegraphics[width=0.7 \linewidth]{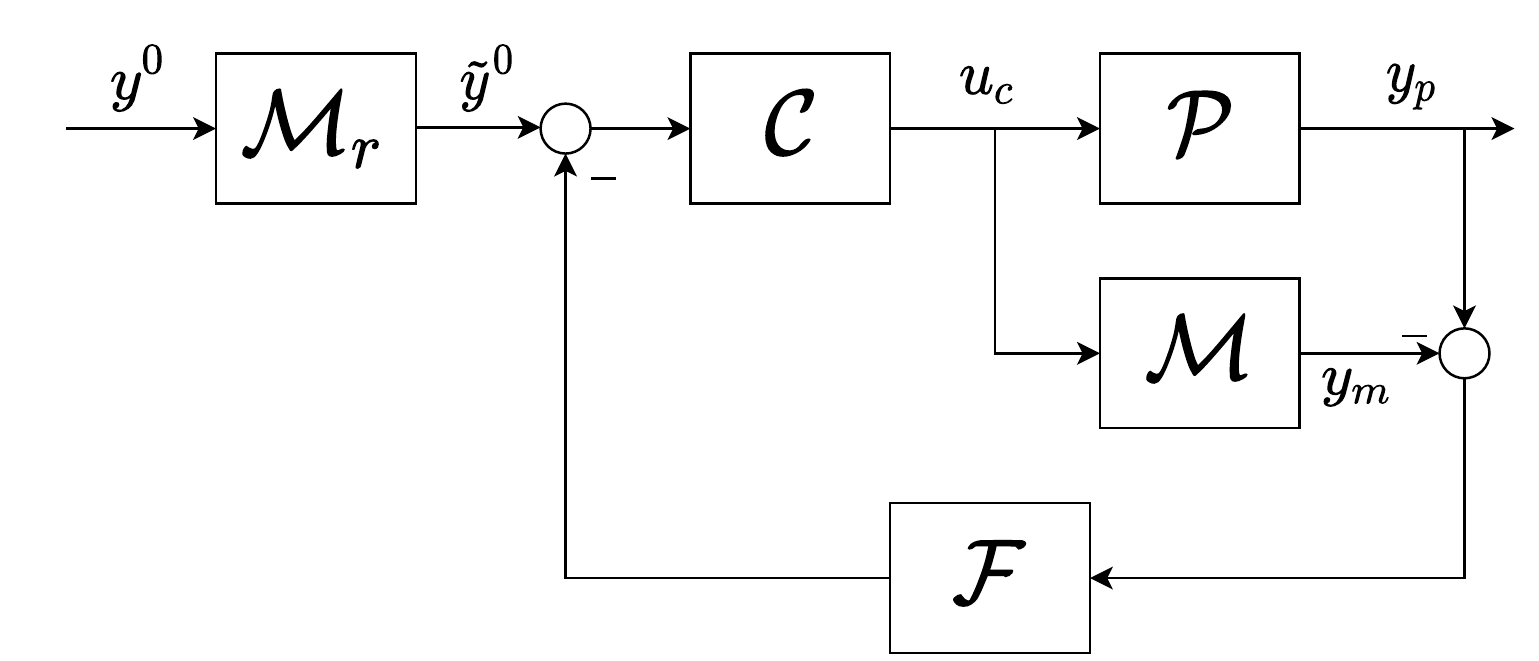}
	\caption{IMC architecture with filtered modeling error feedback.}
	\label{fig:imcscheme_filter}
	\vspace{2mm}
	\centering
	\includegraphics[width=0.8 \linewidth]{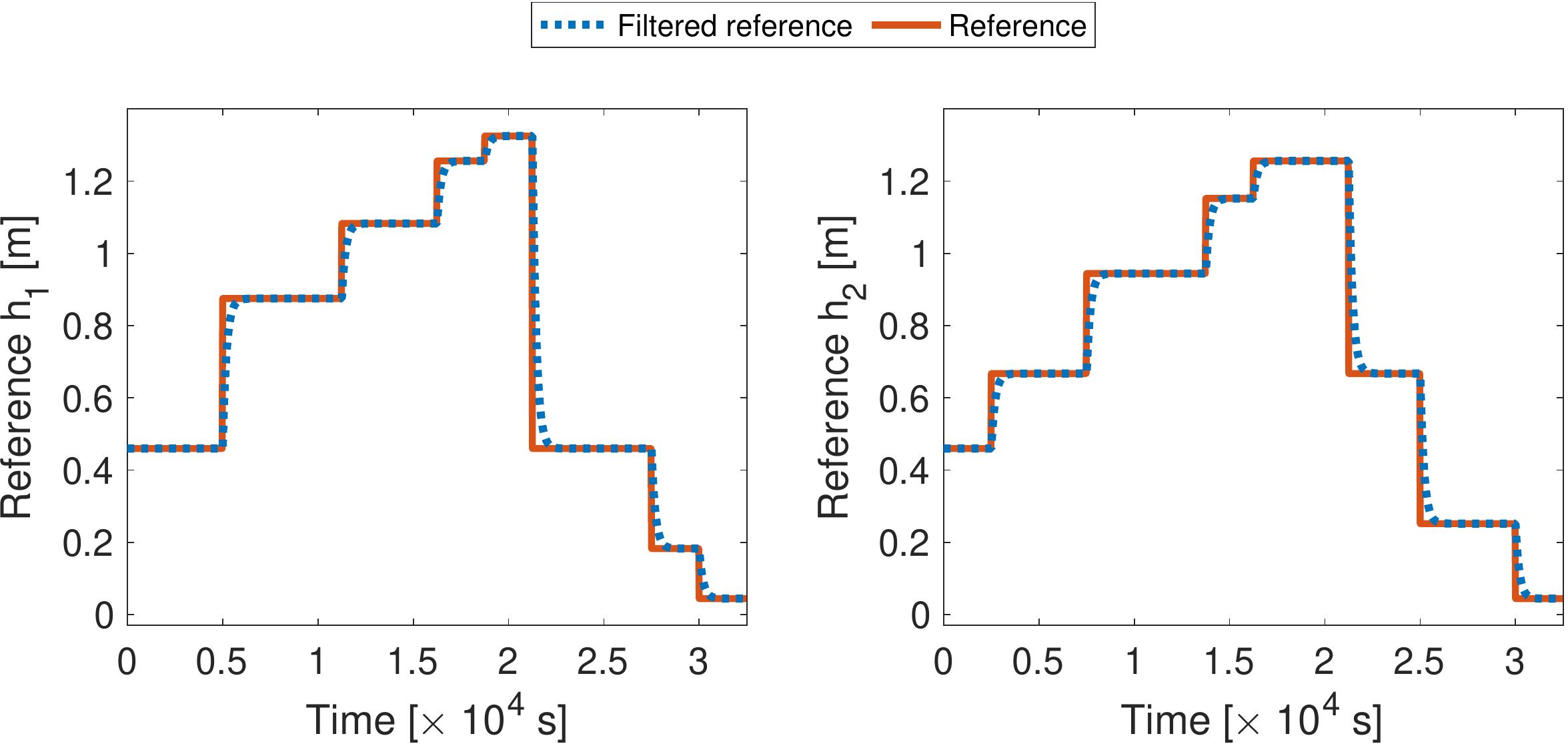}
	\caption{Reference trajectories used for testing the closed-loop performances. Step reference $y^0$ (red line) and filtred reference $\tilde{y}^0$ (blue dotted line). The reference for level $h_1$ is displayed on the left, that for level $h_2$ on the right.}
	\label{fig:closed_loop:references}
\end{figure}

To cope with the measurement noise, the modified architecture shown in Figure \ref{fig:imcscheme_filter} has been considered, where the modeling error feedback $e_m = y_p - y_m$ is customarily filtered by a low-pass filter.
We adopted a low-pass filter $\mathcal{F}$ with the same time constant as the model reference, $\tau_r$. 
This choice is consistent with the IMC literature, see \cite{economou1986internal}.

It is worth noticing that the low-pass filter does not affect the closed-loop stability.
Indeed, the modeling error feedback $e_m$ acts as an additive disturbance on the reference $\tilde{y}^0$ \cite{economou1986internal}, and it converges to the bounded plant-model mismatch (or to zero, in the case of perfect model), see Section \ref{sec:stability}. \smallskip

\begin{figure}[p]
	\centering
	\includegraphics[width=0.8 \linewidth]{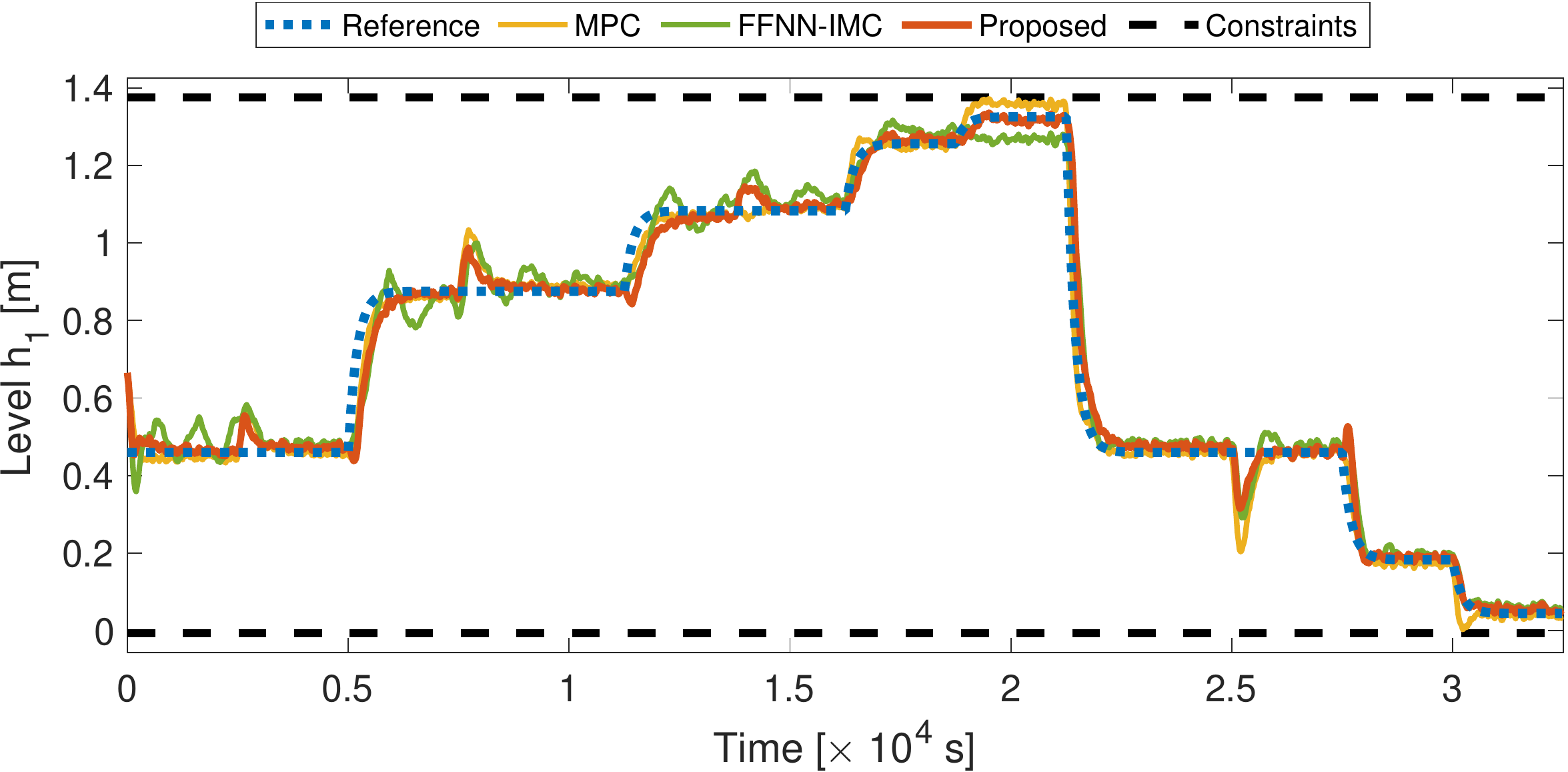}\\
	\vspace{1mm}
	\includegraphics[width=0.8 \linewidth]{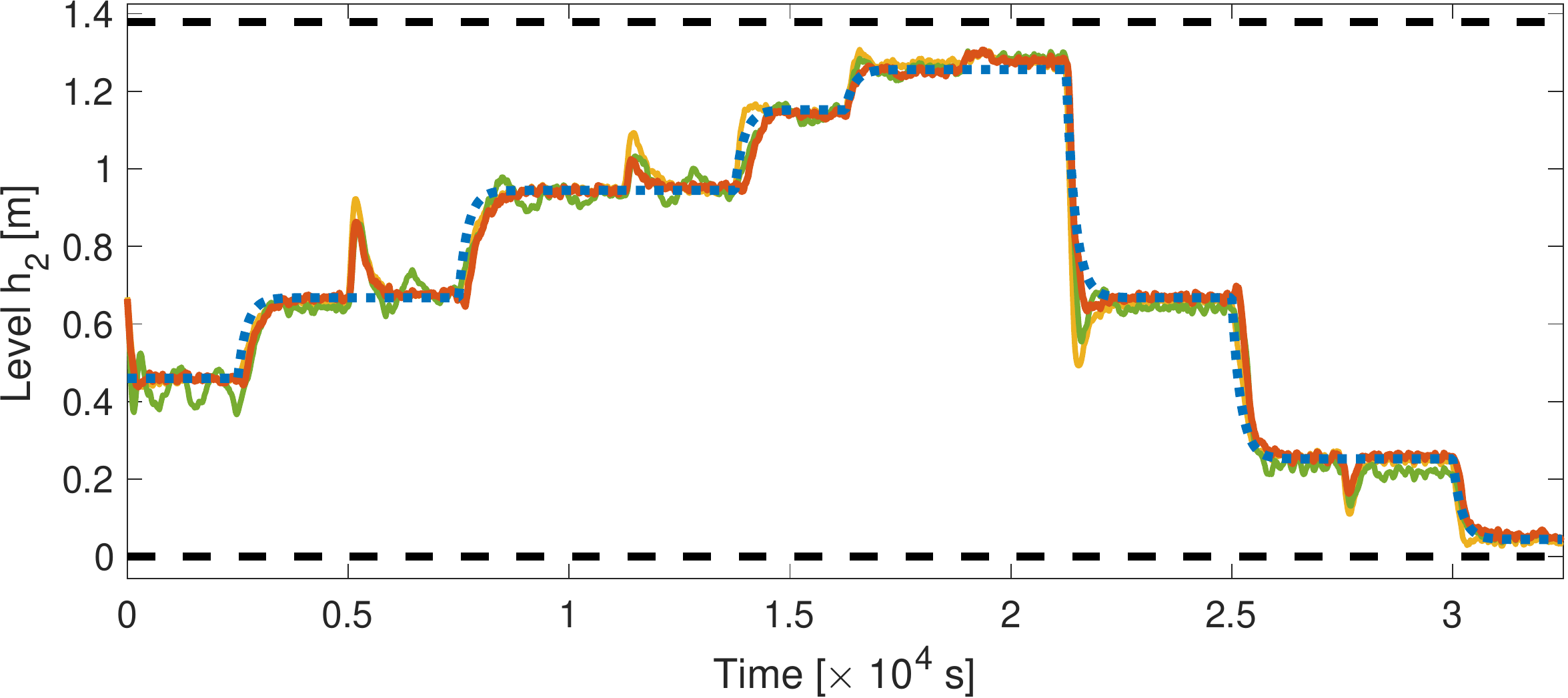}
	\caption{\REV{Closed-loop performances of the three control architectures (MPC, FFNN-IMC, and the proposed IMC approach). The controlled outputs are compared to their reference values and constraints. Level $h_1$ is displayed on top, level $h_2$ on the bottom.}}
	\label{fig:closed_loop:levels}
	\vspace{2mm}
	\centering
	\includegraphics[width=0.8 \linewidth]{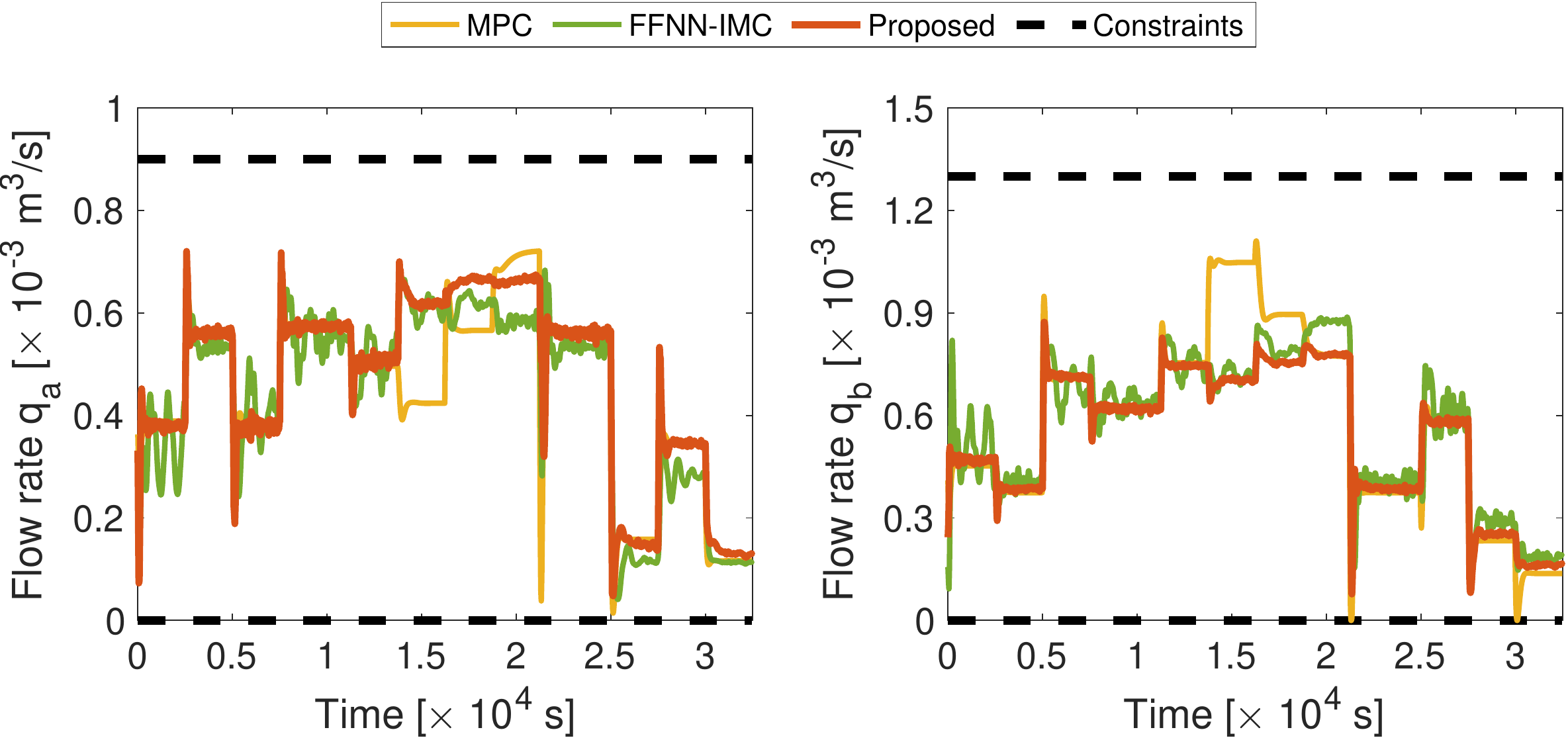}
	\caption{\REV{Comparison of the (denormalized) control action of the three control architectures (MPC, FFNN-IMC, and the proposed IMC approach). Input $q_a$ is displayed on the left, $q_b$ on the right.}}
	\label{fig:closed_loop:u}
\end{figure}

\REV{The closed-loop performances of the proposed approach have been tested and compared to those of two other alternative control architectures:
\begin{enumerate}[a.]
    \item An IMC control architecture realized using FFNNs as system model and controller, on the lines of \cite{rivals2000nonlinear}, adapted to work with MIMO systems; to this end, we assumed a zero control delay and considered FFNNs embedding the $N=6$ past data-points. 
    \item A standard output-tracking nonlinear MPC architecture synthesized using the GRU model $\mathcal{M}$ as predictive model. Since the model's initial state is required to setup the underlying finite-horizon optimization control problem, it has been estimated using a suitably designed state observer for the GRU model, see \cite{bonassi2021nonlinear}.
\end{enumerate}}

For this comparison, the reference trajectories shown in Figure \ref{fig:closed_loop:references} have been adopted, so as to span the set of model's feasible steady-states (depicted in Figure \ref{fig:example:controller_dataset_ss}).

\REV{In Figure \ref{fig:closed_loop:levels} the closed-loop output tracking performances of the three implemented control architectures (i.e. the proposed IMC approach, the MPC, and the FFNN-based IMC) are depicted.
The corresponding tracking error, defined as $e =\tilde{y}^0 - y_p$, is shown in Figure \ref{fig:closed_loop:error}.
}
As expected, cross-couplings are promptly rejected and the controlled outputs are kept close to their reference values, while the controller's architecture allows to satisfy the input saturation constraint\footnote{\REV{In the implemented MPC law, input constraints have been explicitly stated in the optimization problem, while in the FFNN-based IMC, similarly to \eqref{eq:imc:controller:system:output}, a $\tanh$ activation function has been used for the output layer of the controller FFNN.}}, as illustrated in Figure \ref{fig:closed_loop:u}.

\begin{figure}[t]
\centering
\includegraphics[width=0.8\linewidth]{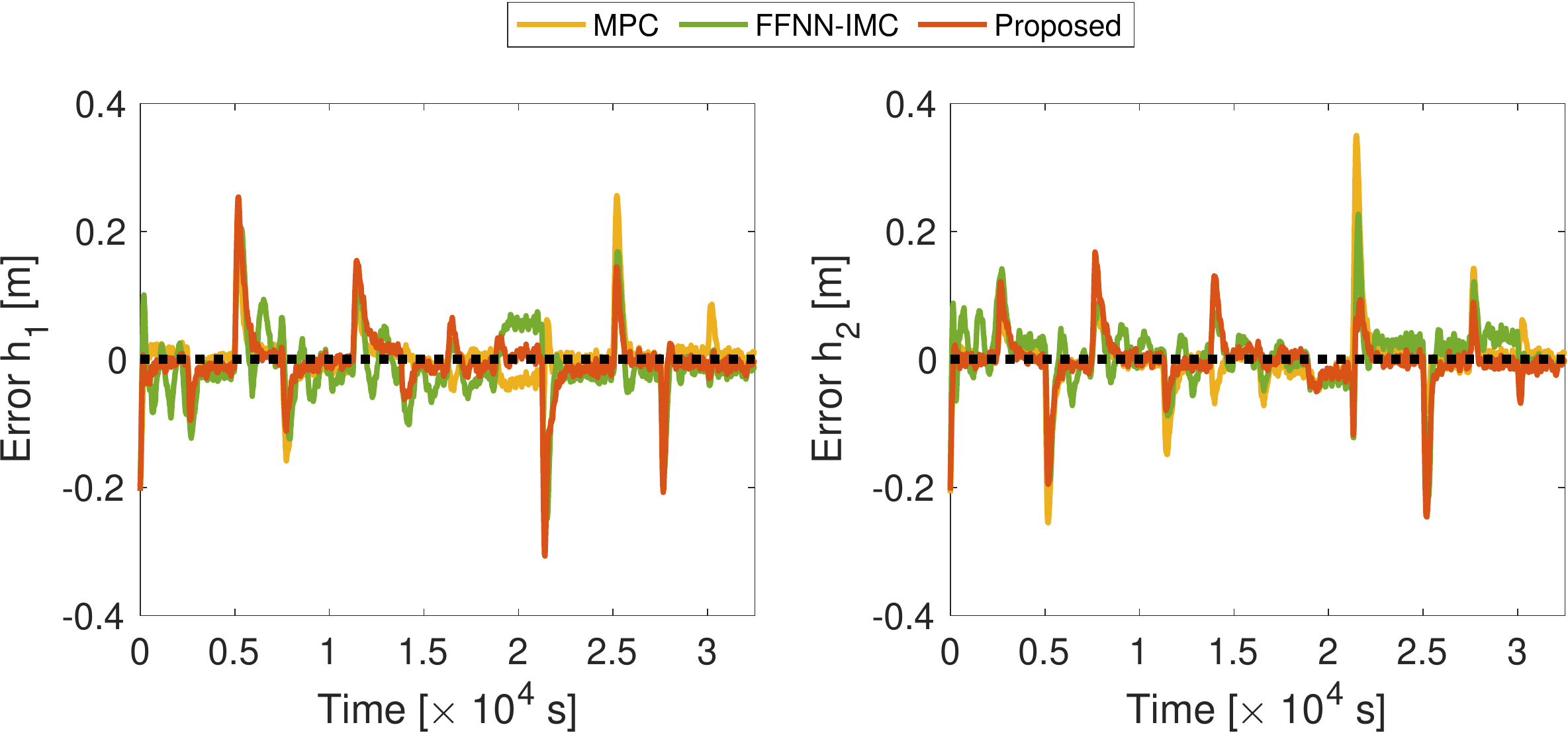}
\caption{\REV{Comparison of the output tracking errors $\tilde{y}^0 - y_p$ for the three control architectures (MPC, FFNN-IMC, and the proposed IMC approach).}}
\label{fig:closed_loop:error}
\end{figure}

\begin{subequations}
\REV{
To evaluate and compare the performances of the three control architectures, the tracking Root-Mean-Square Error (RMSE) is computed as
\begin{equation}
	\epsilon_{tr} = \frac{\| \tilde{\bm{y}}^0_{\scriptscriptstyle T} - \bm{y}_{p, {\scriptscriptstyle T}} \|_{2, 2}}{\sqrt{T}},
\end{equation}
where $T$ indicates the duration of the closed-loop experiment and $\bm{y}_{p, {\scriptscriptstyle T}}$ denotes the closed-loop plant's output. 
The smaller $\epsilon_{tr}$, the better the reference tracking capabilities of the the control scheme.
}

\REV{
Moreover, to evaluate the static performances of the three control architectures, the steady-steady state tracking error has been computed by simulating the closed-loop using the same reference trajectories (depicted in Figure \ref{fig:closed_loop:references}) but removing the gaussian noise affecting the plant's output.
In this way each time the set-point changes, after a sufficiently long transient, the settled closed-loop outputs can be measured and the steady-state error can be computed as
\begin{equation}
	\epsilon_{ss} = \| y_{ss}^0 - y_{p, ss} \|_2,
\end{equation}
where $y_{ss}^0$ denotes the set-point and $y_{p,ss}$ the closed-loop plant's output at steady state. Two static performance indexes can be thus defined as the maximum value of $\epsilon_{ss}$, i.e. $\check{\epsilon}_{ss} \approx \max(\epsilon_ss)$, and its mean value, i.e. $\hat{\epsilon}_{ss} \approx \mathbb{E} [{\epsilon}_{ss}]$, over all the different set-points issued in the noise-free closed-loop simulation.
}
\end{subequations}

\renewcommand{\thefootnote}{\fnsymbol{footnote}}
\begin{table}
	\centering
	\caption{\REV{Performances of the proposed IMC approach}}
	\label{tab:performances}
	\begin{tabular}{l|c|c|c}
		\toprule
		& MPC & FFNN-IMC & Proposed IMC \\
		\midrule 
		 Average computational time [s]\tablefootnote[2]{\REV{Average computational time at each control step. The control architectures have been implemented on a desktop with a 4x4GHz processor and 16Gb of RAM.}} & $3.82$ & $8.4 \cdot 10^{-3}$ & $\bm{1.7 \cdot 10^{-3}}$
 \\
		 Tracking RMSE $\epsilon_{tr}$ [m] & $0.133$ & $0.131$ & $\bm{0.128}$ \\
		 Average steady-state error $\hat{\epsilon}_{ss}$ [m] & $0.82 \cdot 10^{-2}$
& $2.43 \cdot 10^{-2}$  &  $\bm{0.79 \cdot 10^{-2}}$ \\
		 Maximum steady-state error $\check{\epsilon}_{ss}$ [m] & $3.46 \cdot 10^{-2}$ & $5.65 \cdot 10^{-2}$  &  $\bm{2.35 \cdot 10^{-2}}$ \\
		 Closed-loop stability guaranteed & No & No & \textbf{Yes} \\
		 \bottomrule 
	\end{tabular}
\end{table}
\renewcommand{\thefootnote}{\arabic{footnote}}

\REV{
Based on these results, one can conclude that the proposed IMC approach enjoys the following strengths. 
\begin{enumerate}[i.]
    \item \textbf{Performances} -- Owing to the superior modeling capabilities of GRUs,  the proposed IMC approach outperforms the FFNN-based IMC, especially from the steady-state tracking error perspective. Moreover, it also slightly outperforms MPC, probably due to the fact that while MPC assumes the future reference to be constantly equal to the current one, the GRU controller incorporates some knowledge of the future evolution of the reference, having been trained on similar reference trajectories.
    \item \textbf{Computational time} -- As discussed, the computational load of IMC burdens entirely in the synthesis stage. The proposed IMC approach hence requires a limited online computational load, consisting in the propagation of $\mathcal{M}$ and $\mathcal{C}$, beating the slightly higher cost of the FFNN-based IMC, mainly because this latter needs to store a sufficient amount of past data. As expected, the computational burden of MPC is significantly higher, as it requires to solve an online nonlinear optimization problem at each step, that is likely unbearable for low-power embedded boards.
    \item \textbf{Closed-loop stability} -- In light of Proposition \ref{property:stability}, since the model and controller GRUs are trained with $\delta$ISS guarantees, the proposed approach guarantees the input-output stability of the closed-loop. The adopted MPC law does not enjoy guaranteed closed-loop stability, though it is possible to design MPC laws with such guarantee (see e.g. \cite{terzi2021learning, bonassi2020stability}), while no criterion to synthesize a stable FFNN-based IMC has been provided in \cite{rivals2000nonlinear}, especially for MIMO systems.
\end{enumerate}
The advantages of the proposed approach come  at the cost of a more complex training performance. As discussed in \cite{bonassi2020stability}, training provenly-$\delta$ISS networks generally call for a longer training procedure.
}

Lastly, we point out that in applications where offset-free tracking is required, integrators -- equipped with suitable anti-windup mechanisms -- can be placed on the output tracking errors.
This configuration has also been tested in \cite{depari2021design}.

\section{Conclusion} \label{sec:conclusion}
In this paper, we discussed how Recurrent Neural Networks (RNNs), particularly Gated Recurrent Units (GRUs), can be employed to design an Internal Model Control (IMC) architecture to control a stable nonlinear dynamical system.
The system model retrieval and the model inversion were recast as standard RNNs' training procedures.   
Moreover, recent GRUs' stability results were exploited to ensure the stability of both the model and the controller to guarantee the input-output stability of the closed loop.
The proposed approach was tested on the Quadruple Tank benchmark system \REV{and compared to other alternative control architectures, showing enhanced tracking performances and significantly lower computational times}, while satisfying input saturation constraints.
Future work will be devoted to the use of integrators to attain robust offset-free tracking capabilities.

\section*{Acknowledgments}
The authors are indebted to L. Depari for his contribution in the implementation and testing of the proposed algorithm, see \cite{depari2021design}.
\vspace{2mm}

\begin{minipage}[l]{0.1\textwidth}
	\includegraphics[width=\textwidth]{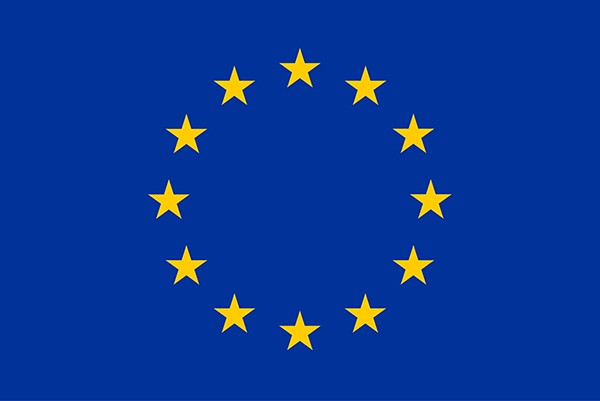}
	\label{fig:euflag}
\end{minipage}
\begin{minipage}[c]{0.05\textwidth}
	\hspace{0.5mm}
\end{minipage}
\begin{minipage}[right]{0.8\textwidth}
	This project has received funding from the European Union's Horizon 2020 research and innovation programme under the Marie Skłodowska-Curie grant agreement No. 953348
\end{minipage}

\bibliographystyle{elsarticle-num}
\bibliography{IMC}

\end{document}